\theoremstyle{thmstyleone}%
\newtheorem{theorem}{Theorem}
\theoremstyle{thmstyletwo}%
\theoremstyle{thmstylethree}%
\begin{document}

\title[Article Title]{EMP: Enhance Memory in Data Pruning}


\author[1,2]{\fnm{Jinying} \sur{Xiao}}\email{jinyingxiao@nudt.edu.cn}
\equalcont{These authors contributed equally to this work.}

\author*[2,3]{\fnm{Ping} \sur{Li}}\email{lping9188@163.com}
\equalcont{These authors contributed equally to this work.}

\author[2]{\fnm{Jie} \sur{Nie}}\email{csustniejie@163.com}

\author*[1]{\fnm{Bin} \sur{Ji}}\email{ jibin@nudt.edu.cn}
\author[1]{\fnm{Shasha} \sur{Li}}\email{  shashali@nudt.edu.cn}
\author[1]{\fnm{Xiaodong} \sur{Liu}}\email{  liuxiaodong@nudt.edu.cn}
\author[1]{\fnm{Jun} \sur{Ma}}\email{  majun@nudt.edu.cn}
\author[1]{\fnm{Qingbo} \sur{Wu}}\email{  wuqingbo@kylinos.cn }
\author*[1]{\fnm{Jie} \sur{Yu}}\email{  yj@nudt.edu.cn}

\affil*[1]{\orgdiv{College of Computer}, \orgname{National University of Defense Technology}, \orgaddress{\street{Deya Road}, \city{Changsha}, \postcode{410073}, \state{Hunan}, \country{China}}}

\affil*[2]{\orgdiv{School of Computer and Communication Engineering}, \orgname{Changsha University of Science and Technology}, \orgaddress{\street{Chiling Road}, \city{Changsha}, \postcode{410076}, \state{Hunan}, \country{China}}}

\affil[3]{\orgdiv{Hunan Provincial Key Laboratory of Intelligent Processing of Big Data on Transp}, \orgaddress{ \city{Changsha}, \postcode{410114}, \state{Hunan}, \country{China}}}


\abstract{Large language and vision models have demonstrated remarkable performance, but their high pre-training and fine-tuning costs have led to growing interest in accelerating training through dataset pruning. Traditional pruning approaches typically rely on sample loss to identify and retain the most "difficult" samples. However, as the pruning rate increases, the training frequency of each sample becomes more uniform, which can result in the underfitting of critical or general samples. We identify this phenomenon as Low-Frequency Learning (LFL), where the model fails to retain knowledge of the majority of samples. In this work, we decompose the scoring function of LFL, providing a theoretical analysis of its inefficiencies. To counteract this issue, we propose a novel enhancement to the scoring function by introducing a memory term designed to strengthen the model's ability to memorize essential data. We also offer an approximation for this memory term. Additionally, we extend this concept to Self-Supervised Learning (SSL), marking the first investigation into the role of memory in SSL. By leveraging contrastive learning, we derive the memory term both theoretically and experimentally. Based on these insights, we introduce Enhance Memory Pruning (EMP), a technique that mitigates memory loss under high pruning rates by improving the model's data retention. EMP has been evaluated across various tasks, including image classification, natural language understanding, and model pre-training. Our experiments demonstrate that EMP significantly enhances model performance, particularly under extreme pruning conditions. For instance, in the CIFAR100-ResNet50 pre-training task with a 70\% pruning rate, EMP outperforms current state-of-the-art methods by 2.2\%.\footnote{The code was shown in https://github.com/xiaojinying/EMP}}

\keywords{Data Pruning, Model Memory, Mutual Information, Self-Supervised Learning, Supervised Learning}



\maketitle
\section{Introduction}
Recently, deep learning technology has achieved significant breakthroughs in fields such as visual recognition \cite{ref20,ref21,ref22} and natural language processing \cite{ref23,ref24,ref25}. Although these models exhibit exceptional performance, they often require training and fine-tuning on large datasets, especially in the pre-training of large language models (LLMs). This process not only demands substantial computational resources but also consumes a considerable amount of time. Therefore, reducing the burden of training models on large-scale datasets has become increasingly important for promoting the application of deep learning technology across a broader range of fields.

Data pruning accelerates model pre-training by retaining a core subset of typical or general samples, thereby reducing the number of training iterations. Recently, dynamic pruning methods that adjust the retained dataset in real-time have become popular \cite{ref4,ref5,ref27}. These methods dynamically score data at each checkpoint, allowing pruned data to be retrained. Since these methods do not require additional training of a proxy network, they have started to gain popularity. Some of these methods use sample loss to construct \cite{ref4,ref5}, selecting the most "difficult" samples to train at each checkpoint. \cite{ref5} indicates that such methods can acquire more information and knowledge. However, we found that these methods are only effective at specific pruning rates. In other words, when the pruning rate is high, these methods exhibit poor performance. Moreover, on datasets that are difficult to fit, such as ImageNet-1K, the performance of low-frequency learning is comparable to, or even lower than, dynamic random pruning methods (see Table~\ref{table2}).

Naturally, we explored the reasons behind this. It is worth noting that these methods retain samples with the highest loss at each checkpoint. However, during the training process, due to the model's strong fitting ability \cite{ref28}, the loss of these samples will eventually decrease \cite{ref26}, and these samples are often pruned in the next selection. This results in an even distribution of the number of times samples are selected, which we refer to as Low-Frequency Learning (LFL). As shown in Figure~(\ref{fig1}), we counted the number of times each sample was selected during the entire training process and found that in LFL, the selection frequency of samples is more concentrated. In other words, the difference in the number of times each sample is included in the training is small, and the number of times is generally low. This limits the number of times each sample is trained. Previous work has shown that repeated learning can enhance model memory \cite{ref18}; for instance, in language models, the model tends to capture familiar phrases and commonly accepted knowledge \cite{ref8}.

\begin{figure}[h]
	\centering
	\includegraphics[width=0.8\textwidth]{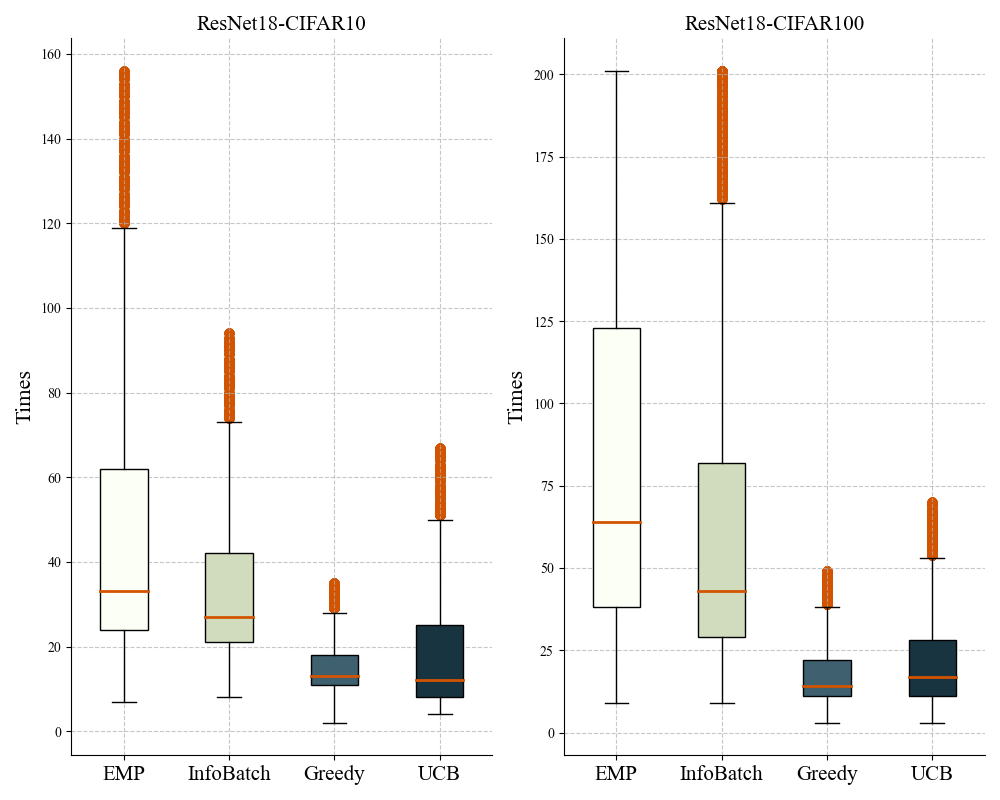}
	\caption{Throughout the entire training process (200 epochs), the number of times each sample is selected is collected under different data pruning algorithms at a pruning rate of 90\%. Among them, InfoBatch, Greedy, and UCB are all pruning methods that score based on sample loss, which is known as Low-Frequency Learning (LFL).}
	\label{fig1}
\end{figure}

Our contributions are as follows:
\begin{itemize} 
	\item Through theoretical analysis, we have studied the inefficiency of LFL and its causes. In this work, we decomposed the cross-entropy function, extracting the memory term that is negatively correlated with cross-entropy. By analyzing the dynamic relationship between LFL and this memory term, we theoretically concluded that LFL leads to insufficient model memory of samples. On the contrary, learning multiple times from typical or general samples is our main research objective.
	\item Our research focused on the memory of our model and advocated for the addition of a memory term to the scoring function to enhance the model's memory of samples. In this work, we specifically discussed two scenarios: supervised learning (SL) and self-supervised learning (SSL). To our knowledge, this is the first time memory has been discussed in the context of SSL. In SL, we used the mutual information term extracted from the cross-entropy function as the memory term, which quantifies the distribution relationship between model weights and samples. Since the mutual information term involves complex distributions of data and parameters, it is not easy to calculate, and we approximated it. In SSL, we focused on contrastive learning (CL). Based on previous work on memory in SL, we transferred their ideas to CL. Since the model's memory of samples is specifically reflected in the hidden layers of the basic encoder, we established a memory term through this hidden layer. Our idea is illustrated in Figure~(\ref{fig2}).
	
	\item We propose the Enhance Memory Pruning (EMP) method, which is a dynamic data pruning approach. We examined the performance of EMP on image classification tasks, natural language understanding tasks, and model pretraining tasks. Through experimental validation, EMP has been shown to lead the state-of-the-art methods in most cases and has achieved commendable performance under high pruning rates. For instance, in image classification, with CIFAR100-ResNet18, even after pruning 70\% of the dataset, EMP still outperforms other methods by at least 2.1\%.
	
\end{itemize}
\begin{figure}[h]
	\centering
	\includegraphics[width=0.8\textwidth]{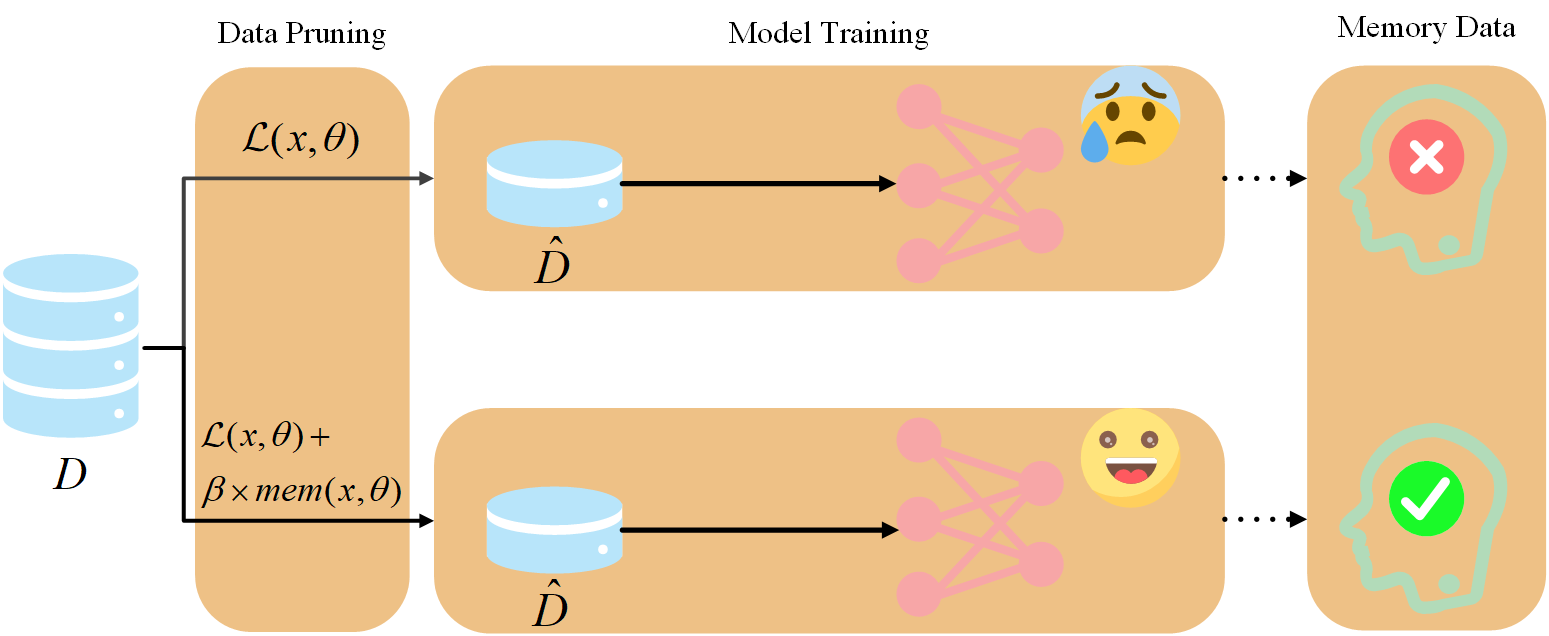}
	\caption{Unlike other methods that use sample loss for scoring, we enhance model memory by adding a memory term \( mem(x, \theta) \), where \( \beta \) is an adjustable hyperparameter.
	}
	\label{fig2}
\end{figure}

\section{Related Work}
\textbf{Static Pruning.} Static pruning aims to select a compact core subset before training. \cite{ref3} analyzed the forgetting of samples during the training process and assessed their forgetfulness in training to eliminate unforgettable samples, \cite{ref44} selected a moderate core set based on the distance of data points to the center, \cite{ref43} utilized the maximum diversity of samples in the gradient space for selection, \cite{ref33} ingeniously used the influence function, constructing a core subset based on the impact of samples on the model's generalization ability. \cite{ref17} designed a first-order gradient approximation to assess the impact of samples on the optimal empirical risk. Although these methods select an efficient core subset, the core subset, as a form of a posteriori knowledge, must be learned through the optimization of one or more proxy models to understand the characteristics and distribution of the data. For example, in \cite{ref33}, SENet and ResNet18 are used as proxy models to accelerate the training of ResNet50. Although these proxy models are smaller in scale, they require training on the entire dataset, which inevitably brings additional overhead. More importantly, these methods result in a core subset with poor generalizability. For instance, in \cite{ref33}, for different specifications of ResNet, the proxy model also requires additional selection and optimization.

\textbf{Dynamic Pruning.} To address the aforementioned additional overhead, \cite{ref5} argued that the optimal dynamic scoring is closely integrated with the training trajectory of the model. They categorized data into three types based on the number of times samples were selected and found that these samples are highly variable and can transition within the training dynamics. They first performed dataset pruning on the retained data's loss at each checkpoint during training without a proxy network, and from their conclusions, dynamic pruning is always superior to static pruning, even random selection. In \cite{ref4}, soft pruning was proposed; they believed that for a dataset with N samples, hard pruning with an unchanged pruning rate requires a complexity of \( \mathcal{O}(\log N) \) for sorting the scores, while soft pruning only requires \( \mathcal{O}(1) \). Although the sorting cost is reduced, experimentally, soft pruning cannot determine the true pruning ratio, thereby explicitly increasing the training cost of the model. It is worth noting that before us, \cite{ref4,ref5} provided state-of-the-art performance in dynamic pruning.

\section{Methodology}
\subsection{Problem Description}
Given a large-scale dataset \( D = \{X, Y\} \) containing \( n \) training samples, where the input \( X = \{x^{(1)}, \ldots, x^{(n)}\} \), labels \( Y = \{y^{(1)}, \ldots, y^{(n)}\} \), and \( f(Y | X, \theta) \) represents the network output parameterized by \( \theta \). The goal of the data is to identify a subset \( \hat{D} = \{\hat{X}, \hat{Y}\} \), where \( \hat{D} \subseteq D \), thereby accelerating the model training process. It is important to note that the pruning rate \( s \) is expressed as \( s = \frac{\Vert D - D_0 \Vert_0}{\Vert D_0 \Vert_0} \). In this paper, the data is generated by the distribution \( p(x, y) \), and we need to use several information quantities such as entropy: $ H(X) = -\mathbb{E} [\log p(x)] $, mutual information: \( I(X; Y) = H(X) + H(Y) - H(X, Y) \), and Kullback-Leibler divergence: \( KL(p(x) || q(x)) = \mathbb{E}_{x \sim p(x)} [\log(p(x) / q(x))] \).

\subsection{LFL Leads to Poor Memory}\label{3.2}
Previous works \cite{ref4,ref5} used loss values to score samples, and at each checkpoint, they selected the portion with the highest loss values according to the pruning rate. We believe this is a form of low frequency learning. Specifically, at the first checkpoint, they selected \( \hat{D}_1 \), and due to the model's gradient descent on \( L(\hat{D}_1, \theta) \) during the training process, these data are very likely not to be retained at the next checkpoint. On the contrary, the algorithm tends to retain the data in \( D_{-\hat{D}_1} \), at which point the chance of data being retained is averaged, as shown in Figure~(\ref{fig1}). However, repeated learning can strengthen the model's memory \cite{ref18}, for example, in language models, the model is more inclined to capture familiar phrases, recognized knowledge \cite{ref8}. Intuitively, LFL does not provide the model with the opportunity for repeated learning, leading to insufficient memory of the data by the model, especially under high pruning rates.

In our experiments, we tested the aforementioned viewpoint, as shown in Figure~(\ref{fig3}), where we established LFL under an extreme condition where each sample is trained the same number of times, which we refer to as Extreme Low-Frequency Learning (ELFL). It is not difficult to see from the graph that under high pruning rates, the model's training accuracy is not high. This situation is not unexpected, as it can be observed that in the early stages of training, due to the inconsistent gradient information from atypical and noisy examples, they may cancel each other out \cite{ref9,ref10}, at which point the model tends to learn general patterns \cite{ref18}, and the rise in training accuracy is not significant. In the later stages of training, the InfoBatch method, which uses loss as a criterion, and ELFL do not show a significant increase in training accuracy, which is particularly prominent in the challenging CIFAR100 dataset. Combining this with the long-tail theory \cite{ref11,ref19}, we believe that in the later stages, important or typical samples appear infrequently, and these samples require more repeated memorization, while the model only remembers those long-tail samples that are easier to remember. Therefore, based on the experiments, pruning methods that use loss as a criterion struggle to memorize the data.

\begin{figure}[h]
	\centering
	\includegraphics[width=\textwidth]{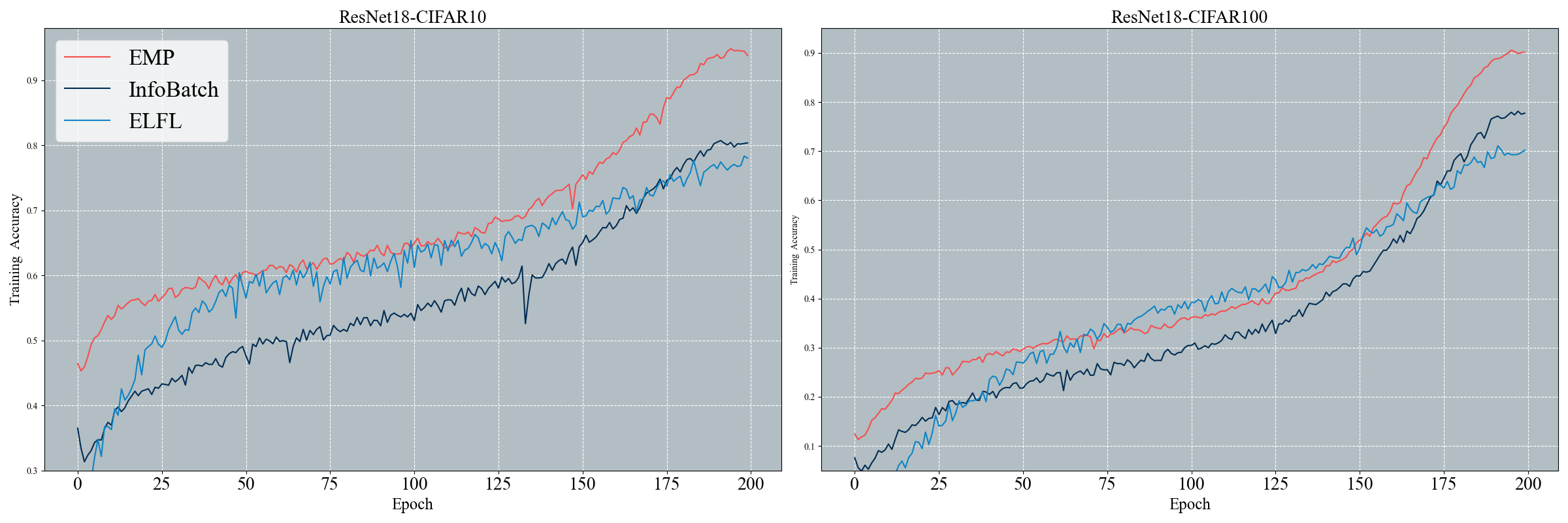}
	\caption{At a pruning rate of 90\%, the training loss across different algorithms and datasets is compared. Among them, ELFL represents Extreme Low-Frequency Learning, and InfoBatch is a method of LFL (Low-Frequency Learning).
	}
	\label{fig3}
\end{figure}

We theoretically explain why selecting samples with the highest loss at each checkpoint can lead to insufficient model memory, which is a phenomenon observed in practice.

In previous work \cite{ref1,ref2}, when the data is generated consistently, the training loss (scoring function) in the form of cross-entropy can be decomposed as:

\begin{eqnarray}
	\label{eq1}
	CE(D,\theta)=H(Y|X)-I(\theta;Y|X)+\mathbb{E}_{(X,\theta)}\ KL[p(Y|X)||f(Y|X,\theta)]
\end{eqnarray}

Where \( CE(D, \theta) \) represents the cross-entropy loss function.

However, it can be noted that, in order to obtain more information \cite{ref3}, previous data pruning methods \cite{ref3,ref4,ref5} select the samples with the highest training loss at each checkpoint. In fact, such algorithms can achieve this by minimizing the second term in Equation~(\ref{eq1}).

\newtheorem{lemma}[theorem]{Lemma}

\begin{theorem}
	\label{th1}
	For a set of \( m \) samples, an independently and identically distributed subset of retained data \( (\hat{X}, \hat{Y}) = \{ (x^{(1)}, y^{(1)}), \ldots, (x^{(m)}, y^{(m)}) \} \), let \( \hat{y}^{(i)} \) represent the model's prediction for the \( i \)-th sample, and let \( c^{(i)} = \mathbf{1} \{ \hat{y}^{(i)} = y^{(i)} \} \) be the correctness variable corresponding to the prediction \( \hat{y}^{(i)} \). Thus, the following inequality holds:
	\begin{eqnarray}
		\label{eq2}
		\mathbb{E}\left[\sum_{i=1}^{m}c^{\left(i\right)}\right]\le\frac{log\left(\left|\hat{Y}\right|-1\right)-H\left(\hat{Y}\middle|\hat{X}\right)+I\left(\theta;\hat{Y}\middle|\hat{X}\right)+\sum_{i=1}^{m}H\left(1-c^{\left(i\right)}\right)}{log\left(\left|\hat{Y}\right|-1\right)}
	\end{eqnarray}
	
\end{theorem}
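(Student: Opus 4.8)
The plan is to recognize the right-hand side of Equation~(\ref{eq2}) as a rearrangement of Fano's inequality applied to the conditional setting, and to build the bound term by term. First I would introduce, for each retained sample $i$, the error indicator $1-c^{(i)} = \mathbb{1}\{\hat{y}^{(i)} \neq y^{(i)}\}$, and apply Fano's inequality to the pair $(\hat{Y}, \hat{y})$ conditioned on $\hat{X}$: this gives a lower bound on the conditional entropy $H(\hat{Y}\mid \hat{X}, \hat{y})$ in terms of the average error probability and $\log(|\hat{Y}|-1)$. The key algebraic identity I would exploit is $H(\hat{Y}\mid\hat{X}) - I(\hat{y};\hat{Y}\mid\hat{X}) = H(\hat{Y}\mid\hat{X},\hat{y}) \le H(1-c) + \mathbb{E}[1-c]\log(|\hat{Y}|-1)$, where the last inequality is Fano; here $\hat{y}$ is a deterministic function of $(\hat{X},\theta)$, so $I(\hat{y};\hat{Y}\mid\hat{X}) \le I(\theta;\hat{Y}\mid\hat{X})$ by the data-processing inequality, which is what lets the mutual-information term $I(\theta;\hat{Y}\mid\hat{X})$ enter on the right.

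Next I would handle the summation over the $m$ samples. Using subadditivity of entropy, $H(\hat{Y}\mid\hat{X}) \ge$ (or a suitable per-coordinate chain-rule bound relating the joint to the sum of marginals), together with the i.i.d. assumption on the retained subset, I would convert the single-letter Fano bound into the sum $\sum_{i=1}^{m} H(1-c^{(i)})$ and the aggregate $\mathbb{E}[\sum_i c^{(i)}]$. Concretely, writing $\mathbb{E}[\sum_i (1-c^{(i)})] = m - \mathbb{E}[\sum_i c^{(i)}]$ and multiplying Fano through by $m$ (or applying it coordinatewise and summing), I would arrive at
\begin{eqnarray*}
\left(m - \mathbb{E}\!\left[\sum_{i=1}^m c^{(i)}\right]\right)\log(|\hat{Y}|-1) \ge H(\hat{Y}\mid\hat{X}) - I(\theta;\hat{Y}\mid\hat{X}) - \sum_{i=1}^m H(1-c^{(i)}),
\end{eqnarray*}
and then rearranging to isolate $\mathbb{E}[\sum_i c^{(i)}]$ yields exactly Equation~(\ref{eq2}) after dividing by $\log(|\hat{Y}|-1)$ and simplifying the constant $m\log(|\hat{Y}|-1)$ against $\log(|\hat{Y}|-1)$ in the numerator. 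I would double-check the bookkeeping so that the stray $m$ cancels and the leading numerator term is $\log(|\hat{Y}|-1)$ as written, which suggests the intended reading normalizes per-sample or absorbs $m$ into the entropy terms.

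The main obstacle I anticipate is making the passage from the single-sample (single-letter) Fano inequality to the $m$-sample aggregate fully rigorous: Fano's inequality in its standard form bounds $H(\hat{Y}\mid\hat{y})$ for one random variable, so extending it to a block of $m$ i.i.d. draws requires either applying it to each coordinate and invoking subadditivity $H(\hat{Y}^{(1:m)}\mid\hat{X}^{(1:m)}) \le \sum_i H(\hat{Y}^{(i)}\mid\hat{X}^{(i)})$ in the correct direction, or treating the block as a single super-symbol with alphabet size $|\hat{Y}|^m$ and then controlling $\log(|\hat{Y}|^m - 1)$ versus $m\log(|\hat{Y}|-1)$. Getting the inequality direction right at this step, and ensuring the data-processing step $I(\hat{y};\hat{Y}\mid\hat{X}) \le I(\theta;\hat{Y}\mid\hat{X})$ is valid (it is, since $\hat{y} = \hat{y}(\hat{X},\theta)$ forms a Markov chain $\hat{Y} \to (\hat{X},\theta) \to \hat{y}$ conditioned on $\hat{X}$), is where the proof most easily goes wrong. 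A secondary subtlety is the implicit assumption $|\hat{Y}| \ge 2$ (or $\ge 3$ for the $\log(|\hat{Y}|-1)$ term to be positive and the division to preserve the inequality), which I would state explicitly as a hypothesis.
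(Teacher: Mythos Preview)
Your proposal is correct and follows essentially the same route as the paper: per-sample Fano, aggregation, and the identity $H(\hat Y\mid\hat X,\theta)=H(\hat Y\mid\hat X)-I(\theta;\hat Y\mid\hat X)$. The paper resolves the aggregation step you flagged by applying Fano coordinatewise conditioned on $(x^{(i)},\theta)$, then chaining $H(y^{(i)}\mid x^{(i)},\theta)\ge H(y^{(i)}\mid \hat X,\theta)$ (more conditioning) and $\sum_i H(y^{(i)}\mid \hat X,\theta)\ge H(\hat Y\mid \hat X,\theta)$ (subadditivity) before substituting the mutual-information identity; your suspicion about the stray $m$ is also well founded, as the paper's own derivation yields $m\log(|\hat Y|-1)$ in the numerator and the final displayed statement simply omits the factor.
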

\begin{proof}
	
	For each sample \( (x_i, y_i) \), we consider the following Markov chain:
	
	$y^{\left(i\right)}\rightarrow\left[\begin{matrix}\hat{X}\\\hat{Y}\\\end{matrix}\right]\rightarrow\left[\begin{matrix}x^{\left(i\right)}\\\theta\\\end{matrix}\right]\rightarrow\hat{y}^{\left(i\right)}$
	
	Under this Markov chain, Fano's inequality \cite{ref7} provides a lower bound on the training error:
	\begin{eqnarray}
		\label{eq3}
		P\left(c^{\left(i\right)}=0\right)\geq\frac{H\left(y^{\left(i\right)}\middle| x^{\left(i\right)}\right)-H\left(1-c^{\left(i\right)}\right)}{log\left(\left|\hat{Y}\right|-1\right)}
	\end{eqnarray}
	
	For the retained dataset \( (\hat{X}, \hat{Y}) \) with a sample size of \( m \), the sum can be obtained as follows:
	\begin{eqnarray}\label{eq4}
		\begin{split}
			\sum_{i=1}^{m}P\left(c^{\left(i\right)}=0\right)\geq&\frac{\sum_{i=1}^{m}\left(H\left(y^{\left(i\right)}\middle|x^{\left(i\right)},\theta\right)-H\left(1-c^{\left(i\right)}\right)\right)}{log\left(\left|\hat{Y}\right|-1\right)}\\\geq&\frac{\sum_{i=1}^{m}\left(H\left(y^{\left(i\right)}\middle|\hat{X},\theta\right)-H\left(1-c^{\left(i\right)}\right)\right)}{log\left(\left|\hat{Y}\right|-1\right)}\\\geq&\frac{H\left(\hat{Y}\middle|\hat{X},\theta\right)-\sum_{i=1}^{m}H\left(1-c^{\left(i\right)}\right)}{log\left(\left|\hat{Y}\right|-1\right)}
		\end{split}
	\end{eqnarray}
	
	Given that \( H(Y | X, \theta) = H(Y | X) - I(\theta; Y | X) \), the result is obtained as:
	\begin{eqnarray}\label{eq5}
		\mathbb{E}\left[\sum_{i=1}^{m}c^{\left(i\right)}\right]\le\frac{log\left(\left|\hat{Y}\right|-1\right)-H\left(\hat{Y}\middle|\hat{X}\right)+I\left(\theta;\hat{Y}\middle|\hat{X}\right)+\sum_{i=1}^{m}H\left(1-c^{\left(i\right)}\right)}{log\left(\left|\hat{Y}\right|-1\right)}
	\end{eqnarray}
\end{proof}

Theorem~\ref{th1} establishes an expected upper bound on the training accuracy on the retained dataset, and it can be observed that this upper bound decreases as \( I(\theta; Y | X) \) decreases. As mentioned earlier, by selecting for learning based on the largest losses, such algorithms can achieve this by minimizing the second term in Equation~(\ref{eq1}). In this case, the \( I(\theta; \hat{Y} | \hat{X}) \) of the retained data tends to decrease, and according to Equation~(\ref{eq2}), we can intuitively consider that the training accuracy will decrease at this time.

\begin{lemma}
	\label{le1}
	In dynamic pruning, the data pruning algorithm \( A(D)_k \) based on loss selects the data for the \( k \)-th epoch, that is:
	
	$\forall z_i\in D,\forall z_j\in A\left(D\right)_k,\mathcal{L}\left(z_i,\theta_k\right)\le\mathcal{L}\left(z_j,\theta_k\right)$
	
	Then the resulting model has a weak memory of the data, in other words, the model fits the training data poorly.
\end{lemma}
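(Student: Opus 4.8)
The plan is to reduce the qualitative assertion ``the model fits the training data poorly'' to the quantitative statement that the expected fraction of correctly classified training samples stays strictly below $1$, and then to obtain that bound as a consequence of loss-based selection via the decomposition in Equation~(\ref{eq1}) together with Theorem~\ref{th1}. Concretely, ``weak memory of the data'' will be taken to mean that $\mathbb{E}\big[\tfrac1m\sum_{i=1}^{m}c^{(i)}\big]$ is bounded away from $1$ on the data the model is trained on, which by Theorem~\ref{th1} is controlled by how small the conditional mutual information $I(\theta;\hat Y\mid\hat X)$ is: the remaining ingredients of the bound in Equation~(\ref{eq2}) are fixed either by the label alphabet ($\log(|\hat Y|-1)$) or by the data distribution ($H(\hat Y\mid\hat X)$), and $\sum_i H(1-c^{(i)})$ is precisely the error term one wants to show cannot be suppressed.

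First I would fix an epoch $k$, write $\hat D_k=A(D)_k$ for the retained set of size $m$, and use the selection rule: by hypothesis $\hat D_k$ is, up to ties, the size-$m$ subset of $D$ with the largest total loss $\sum_{z\in\hat D_k}\mathcal L(z,\theta_k)$, so the per-sample cross-entropy on $\hat D_k$ is at least that on $D$ and $CE(\hat D_k,\theta_k)$ is large. Applying Equation~(\ref{eq1}) to the pair $(\hat X_k,\hat Y_k)$ gives
\[
CE(\hat D_k,\theta_k)=H(\hat Y_k\mid\hat X_k)-I(\theta_k;\hat Y_k\mid\hat X_k)+\mathbb{E}_{(\hat X_k,\theta_k)}\,KL\big[p(\hat Y_k\mid\hat X_k)\,\|\,f(\hat Y_k\mid\hat X_k,\theta_k)\big],
\]
and since $H(\hat Y_k\mid\hat X_k)$ depends only on the data distribution, keeping $CE$ large across epochs forces $I(\theta_k;\hat Y_k\mid\hat X_k)$ to stay small, i.e. the weights remain nearly uninformative about the retained labels. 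This is exactly the ``minimizing the second term of Equation~(\ref{eq1})'' mechanism invoked in the discussion following Theorem~\ref{th1}.

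Next I would propagate this over the training trajectory to pass from the retained set to the full dataset. After the SGD step on $\hat D_k$ the loss on $\hat D_k$ drops, so at epoch $k+1$ those samples are pruned, $\hat D_{k+1}$ is essentially disjoint from $\hat D_k$, and the update on $\hat D_{k+1}$ drifts $\theta$ away from the configuration that was informative about $\hat Y_k$; hence no sample's contribution to $I(\theta;\cdot\mid\cdot)$ accumulates across epochs. This is precisely the low-frequency-learning behaviour of Figure~(\ref{fig1}). Summing the per-epoch inequalities and invoking the data-processing inequality along the Markov chain used in Theorem~\ref{th1} then yields that the final model's mutual information $I(\theta;Y\mid X)$ for the full dataset $D$ is small, and feeding this back into Theorem~\ref{th1} bounds $\mathbb{E}\big[\tfrac1n\sum_i c^{(i)}\big]$ strictly below $1$: the model classifies --- and therefore remembers --- only a bounded fraction of the training set, which is the claimed weak memory.

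The main obstacle is controlling the KL term in Equation~(\ref{eq1}): a large $CE(\hat D_k,\theta_k)$ only forces $I(\theta_k;\hat Y_k\mid\hat X_k)$ to be small if $\mathbb{E}\,KL[p\,\|\,f]$ is itself bounded, whereas the selected set is by construction exactly where the predictor deviates most from the true conditional. I would close this gap either by assuming the network's conditional is approximately well specified along the trajectory (so the KL term is $O(1)$), or --- preferably --- by replacing the single-epoch argument with a direct dynamical one: showing that the per-epoch increase in $I(\theta;\hat Y_k\mid\hat X_k)$ produced by SGD on $\hat D_k$ is of smaller order than the decrease caused by the subsequent epochs' updates on disjoint retained sets (catastrophic forgetting under the fast cycling that a high pruning rate $s$ induces), so that the net accumulated mutual information, and hence the accuracy bound of Theorem~\ref{th1}, cannot approach its maximum. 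A secondary technical point is making the implication ``$A(D)_k$ selects high-loss samples $\Rightarrow$ it minimizes the second term of Equation~(\ref{eq1})'' rigorous as a statement about the coupled selection/optimization dynamics rather than about any single epoch.
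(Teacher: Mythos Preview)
Your approach is essentially the same as the paper's: use the decomposition in Equation~(\ref{eq1}) to argue that loss-based selection drives $I(\theta;\hat Y\mid\hat X)$ towards zero, then invoke Theorem~\ref{th1} to conclude that the training-accuracy upper bound shrinks, hence the fit is poor. The paper's actual proof is a two-sentence sketch that simply asserts ``algorithms with limited memory capacity for labels will cause $I(\theta;\hat Y\mid\hat X)$ to tend towards $0$'' and then cites Theorem~\ref{th1}; it does not address the KL term, does not attempt any cross-epoch propagation, and stays on the retained set rather than extending to the full $D$.

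In other words, the gap you flag --- that a large $CE(\hat D_k,\theta_k)$ only forces small mutual information if the KL term is controlled --- is real, but the paper does not close it either; it treats the implication ``high-loss selection $\Rightarrow$ small $I(\theta;\hat Y\mid\hat X)$'' as given by the discussion preceding Theorem~\ref{th1}. Your additional machinery (tracking the selection/optimization dynamics across epochs, invoking data-processing along the trajectory, and extending from $\hat D_k$ to $D$) goes well beyond what the paper supplies. If your goal is to match the paper, you can drop that machinery and simply cite Equation~(\ref{eq1}) and Theorem~\ref{th1}); if your goal is a rigorous argument, your identified obstacle is precisely the missing piece in the paper's own proof.
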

\begin{proof}
	According to Equation~(\ref{eq1}), when fitting the model with the subset of data that has the highest training loss, algorithms with limited memory capacity for labels will cause \( I(\theta; \hat{Y} | \hat{X}) \) to tend towards 0. At this point, according to Theorem~\ref{th1}, this will reduce the upper bound of training accuracy, thereby affecting the model's fit.
	
	Lemma~\ref{le1} indicates that when fitting the model with the subset of data that has the highest training loss, it makes the model weights \( \theta \) and the data labels \( \hat{Y} \) become more independent, reducing their correlation. This result means that the model has difficulty remembering these data. In simple terms, these algorithms fail to fit correctly when faced with data with large losses, and the phenomenon highlighting this situation is the insufficiency of training accuracy.
\end{proof}

\subsection{Enhancing Memory in SL}\label{3.3}
In Section~\ref{3.2}, the inability of LFL to enable model memorization of data was analyzed. In the context of supervised learning, looking at Equation~(\ref{eq1}), we need to increase \( I(\theta; Y | X) \) to enable the model to remember the data. To achieve this goal, it is first necessary to estimate \( I(\theta; Y | X) \), which involves the independence of \( \theta \) and \( Y \). Previous work has used gradients \cite{ref1} and differential methods \cite{ref12} to estimate mutual information. In our work, we decompose \( I(\theta; Y | X) \) as follows:
\begin{eqnarray}
	\label{eq6}
	I\left(\theta;Y\middle| X\right)=H\left(\theta\right)+H\left(Y\middle| X\right)-H\left(\theta,Y\middle| X\right)
\end{eqnarray}

Generally, \( H(\theta, Y | X) \leq H(\theta) \), and our proof can be found in the Appendix~\ref{app_1}. Thus, we can establish a lower bound for \( I(\theta; Y | X) \):
\begin{eqnarray}
	\label{eq7}
	I\left(\theta;Y\middle| X\right)\geq H\left(Y\middle| X\right)
\end{eqnarray}

Where the conditional entropy \( H(Y | X) \) is a measure of how uncertain Y is given X. For a sample \( (x^{(i)}, y^{(i)}) \), we represent the conditional entropy of a sample by \( H(f(x^{(i)}, \theta)) \), where \( f(x^{(i)}, \theta) \) denotes the output of the model with parameters \( \theta \) for the input \( x^{(i)} \).

Therefore, we establish the scoring function for the supervised learning scenario. For the data \( (x^{(i)}, y^{(i)}) \), we use the following formula to score it:
\begin{eqnarray}
	\label{eq8}
	score\left(\left(x^{\left(i\right)},y^{\left(i\right)}\right)\right)=\mathcal{L}\left(\left(x^{\left(i\right)},y^{\left(i\right)}\right),\theta_k\right)+\beta H\left(f\left(x^{\left(i\right)},\theta_k\right)\right)
\end{eqnarray}

In which, \( \theta_k \) represents the current model parameters, \( \mathcal{L}(\cdot) \) generally refers to the cross-entropy function, and \( \beta \) is an adjustable hyperparameter that balances the model's learning of general patterns and memorization of data. The core idea of this scoring function is to use the sample loss to enable the model to obtain as much information as possible, allowing the model to learn not only general patterns but also, through the second term, to prefer samples that are easy to remember, achieving higher model performance.

\begin{lemma}
	\label{le2}
	For a subset \( \hat{D} \subseteq D \), if \( \hat{D} \) satisfies:
	
	$\forall\left(x^{\left(i\right)},y^{\left(i\right)}\right)\in\hat{D},\forall\left(x^{\left(j\right)},y^{\left(j\right)}\right)\in D-\hat{D},score\left(\left(x^{\left(i\right)},y^{\left(i\right)}\right)\right)>score\left(\left(x^{\left(j\right)},y^{\left(j\right)}\right)\right)$
	
	Then, compared to LFL, the subset \( \hat{D} \) will result in a model with a smaller upper bound on generalization error.
\end{lemma}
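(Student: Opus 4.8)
The plan is to reduce the claim to a comparison of the training–accuracy bounds supplied by Theorem~\ref{th1} and then to lift that comparison to a population–risk statement through a standard risk decomposition. First I would fix the meaning of ``compared to LFL'': both the loss rule and the score rule of Equation~(\ref{eq8}) retain a subset of the \emph{same} size $m$ at epoch $k$, so in the bound of Theorem~\ref{th1} the cardinality factor $\log(|\hat Y|-1)$, the denominator, and the error–indicator terms $\sum_i H(1-c^{(i)})$ (each bounded by $\log 2$) are common to the two rules up to comparable terms. Rewriting the numerator with $I(\theta;\hat Y\mid\hat X)=H(\hat Y\mid\hat X)-H(\hat Y\mid\hat X,\theta)$, it becomes $\log(|\hat Y|-1)-H(\hat Y\mid\hat X,\theta)+\sum_i H(1-c^{(i)})$, so the only genuinely selection–dependent quantity is $H(\hat Y\mid\hat X,\theta)$: the retained subset with the smaller $H(\hat Y\mid\hat X,\theta)$ admits the larger upper bound on training accuracy, i.e. the smaller attainable empirical risk $\widehat R(\theta)$.

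Second, I would show that the score rule of Equation~(\ref{eq8}) produces a subset with smaller $H(\hat Y\mid\hat X,\theta)$ than LFL. By Lemma~\ref{le1}, pure loss selection drives $I(\theta;\hat Y\mid\hat X)\to 0$, i.e. $H(\hat Y\mid\hat X,\theta)\to H(\hat Y\mid\hat X)$, its maximal value. The extra term $\beta H(f(x^{(i)},\theta_k))$ in Equation~(\ref{eq8}) is precisely the per–sample surrogate for $H(\hat Y\mid\hat X)$, which by Equation~(\ref{eq7}) lower–bounds $I(\theta;\hat Y\mid\hat X)$; hence maximizing the aggregate score keeps $I(\theta;\hat Y\mid\hat X)$ bounded away from $0$, so $H_{\hat D}(\hat Y\mid\hat X,\theta)<H_{\mathrm{LFL}}(\hat Y\mid\hat X,\theta)$. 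Feeding this into the form of Theorem~\ref{th1} from the first step yields $\widehat R_{\hat D}(\theta)\le\widehat R_{\mathrm{LFL}}(\theta)$ for the two resulting models.

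Third, I would combine this with a generalization decomposition $R(\theta)\le\widehat R(\theta)+\mathrm{gap}(m,\mathcal H)$, where $\mathrm{gap}$ is furnished by a Rademacher–complexity, algorithmic–stability, or PAC–Bayes argument and depends only on the retained–set size $m$ and the hypothesis class $\mathcal H$, both identical for the two rules. Since the empirical term is smaller for $\hat D$ and the gap term is the same, the resulting upper bound on the generalization error $R(\theta)$ is smaller for $\hat D$, which is exactly the assertion of the lemma.

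The hard part will be making step three honest: the information–theoretic generalization bounds of Russo--Zou / Xu--Raginsky scale with $I(\theta;S)$, so one could worry that boosting the memory term inflates the gap. I would argue around this by separating $I(\theta;\hat X)$ (the input–marginal information / effective complexity, left untouched by Equation~(\ref{eq8})) from $I(\theta;\hat Y\mid\hat X)$ (the fitting component, which the memory term \emph{raises}): only the former feeds the gap, while the latter only improves the empirical term. Alternatively, under a realizability / Massart–noise condition one obtains a fast–rate bound in which a smaller empirical risk directly yields a smaller population–risk bound, bypassing the tension entirely. Converting this separation from a heuristic into a theorem — and pinning down the range of $\beta$ for which it is valid — is where the real difficulty lies.
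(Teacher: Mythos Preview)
Your third step is exactly the paper's entire proof: the authors invoke a uniform-convergence-style bound $\mathcal R(\hat\theta_{\hat D})\le \hat{\mathcal R}(\hat D,\hat\theta_{\hat D})+\varepsilon$ from \citep{ref29}, declare that $\varepsilon$ depends only on the model size and the retained-set cardinality (hence is identical for $\hat D$ and $\hat D_{\mathrm{LFL}}$), and then simply assert $\hat{\mathcal R}(\hat D,\hat\theta_{\hat D})<\hat{\mathcal R}(\hat D_{\mathrm{LFL}},\hat\theta_{\hat D_{\mathrm{LFL}}})$ ``based on the above experimental and theoretical analysis.'' That is the whole argument; the paper does not revisit Theorem~\ref{th1} or manipulate $H(\hat Y\mid\hat X,\theta)$ at this point.

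Your first two steps are therefore additional scaffolding the paper omits: you actually try to \emph{derive} the empirical-risk inequality from Theorem~\ref{th1} and Lemma~\ref{le1}, whereas the paper treats it as already established by the preceding discussion. This is a genuine improvement in rigor, though note that Theorem~\ref{th1} gives an \emph{upper} bound on training accuracy, so ``larger upper bound'' does not by itself force ``smaller realized empirical risk''; you would still need a matching lower-bound or a tightness argument, which neither you nor the paper supplies. Your worry about the gap term is also well-placed but moot in the paper's framing: by citing a complexity-based bound rather than a Russo--Zou/Xu--Raginsky mutual-information bound, the authors sidestep the tension you identify entirely---$\varepsilon$ is simply posited to be data-independent, so no separation of $I(\theta;\hat X)$ from $I(\theta;\hat Y\mid\hat X)$ is needed.
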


\begin{proof}
	
	We assume that the subset generated by LFL is \( \hat{D}_{\text{LFL}} \). Previous work \cite{ref29} has expressed the upper bound on generalization error, which in our context can be represented as:
	\begin{eqnarray}
		\label{eq9}
		\mathcal{R}\left({\hat{\theta}}_{\hat{D}}\right)\le\hat{\mathcal{R}}\left(\hat{D},{\hat{\theta}}_{\hat{D}}\right)+\varepsilon
	\end{eqnarray}
	
	Where \( \mathcal{R}(\hat{\theta}_{\hat{D}}) \) is the expected loss, \( \hat{\mathcal{R}}(\hat{D}, \hat{\theta}_{\hat{D}}) \) is the empirical risk on \( \hat{D} \), which is the fitting loss, \( \varepsilon \) is a coefficient related to the model size and the size of the retained dataset, which remains fixed after the pruning rate and model architecture are set, and \( \hat{\theta}_{\hat{D}} \) represents the optimal parameters obtained after optimization with \( \hat{D} \).
	Our method enhances the model's memory, and likewise, improves the model's fitting capability. Therefore, based on the above experimental and theoretical analysis, we can conclude that:
	\begin{eqnarray}
		\label{eq10}
		\hat{\mathcal{R}}\left(\hat{D},{\hat{\theta}}_{\hat{D}}\right)<\hat{\mathcal{R}}\left({\hat{D}}_{\text{LFL}},{\hat{\theta}}_{\hat{D}_{\text{LFL}}}\right)
	\end{eqnarray}
	
	Thus, Lemma~\ref{le2} is proven. Compared to LFL that uses sample loss, our scoring function results in a model with a smaller upper bound on generalization error.
\end{proof}
The data pruning method in SL is presented in Algorithm~\ref{alorithm 1}.
\begin{algorithm}[h]
	\caption{EMP in SL}
	\label{alorithm 1}
	\begin{algorithmic}[1]
		\Require Dataset \( D \), Epoch \( T \), Pruning Ratio \( s \), Initial Model \( \theta \), Retained Dataset \( \hat{D}_0 = D \).
		
		\hrule
		\vspace{0.1cm}
		\For{$t$ from 0 to $T-1$}
		\If{$t!=0$}
		\State Select the retained dataset \( \hat{D}_t \) based on the pruning rate \( s \) and sample scores.
		
		\Else
		\State $\hat{D}_t=D$
		\EndIf
		\State Get train sequence $\hat{D}_t=\{B_0,B_1,,,B_{b_t-1}\}$
		\For{$j$ from 0 to $b_t-1$}
		\State Get loss $L(B_j,\theta_t^j)$
		\State Update score with Equation~(\ref{eq8})
		\State Update model with optimizer
		\EndFor
		\EndFor
	\end{algorithmic}
\end{algorithm}

\subsection{Enhancing Memory in SSL}\label{3.4}
In SSL, we mainly discuss Contrastive Learning (CL). CL learns representations of data by focusing on positive samples that are close to each other (such as augmentations from the same image) and excluding negative samples (such as augmentations from different images) \cite{ref13}. By promoting the closeness of positive examples and maximizing the separation between negative examples in the latent space, it learns the representation of the data \cite{ref14}.

In CL, a relatively simple and practical framework is SimCLR \cite{ref15}, which is an easy-to-implement visual representation contrastive learning framework. We mainly discuss the training process of the basic encoder \( f(\cdot) : \mathbb{R}^{L \times 1} \longrightarrow \mathbb{R}^{H \times 1} \), where \( f(\cdot) \) typically uses a ResNet architecture \cite{ref16}. For a sample \( x \in \mathbb{R}^{L \times 1} \), two independent data augmentation operators can be sampled to obtain \( x_i, x_j \in \mathbb{R}^{L \times 1} \). The basic encoder maps the two data to intermediate representations \( f(x_i), f(x_j) \). During the training phase, a projection head \( g(\cdot) \) is often used to obtain the outputs \( g(f(x_i)), g(f(x_j)) \). And training is conducted using the NT-Xent loss to decrease.

To our knowledge, this is the first time discussing data memory in CL. As discussed in Section~\ref{3.2}, under high pruning rates, it is necessary to strengthen the model's memory of data to achieve more efficient performance. Since \( I(\theta; Y | X) \) is extracted from the cross-entropy function, and CL training is independent of data labels \( Y \), therefore, directly transferring the memory enhancement methods from SL to CL is not effective.

Previous work \cite{ref11} has explored model memory, where in SL, they compared the model's probability of a sample before and after the sample's participation in training. Since in CL, the model primarily learns the intrinsic features of the data without involving data labels in training, we extend this concept to a more general form:

\begin{eqnarray}
	\label{eq11}
	mem\left(x\right)=\begin{matrix}loss\left(x_i,x_j\right)\\A\left(D-x\right)\\\end{matrix}-\begin{matrix}loss\left(x_i,x_j\right)\\A\left(D\right)\\\end{matrix}
\end{eqnarray}

In which, \( A(D) \) represents the optimization algorithm utilizing the entire dataset, and the construction of the \( \text{loss}() \) function is worth discussing.

It is worth noting that in CL, if the model fits the sample \( x \) well, it implies that the model will judge the two data generated from \( x \), \( x_i \) and \( x_j \), as positive samples, and the NT-Xent loss function constructed by \( x_i \) and \( x_j \) will also decrease. In other words, when the model's memory of the sample \( x \) is enhanced, it means that for \( x_i \) and \( x_j \), the model's output tends to be the same, which is determined by the SimCLR framework. \cite{ref15} points out that whether it is a linear or nonlinear projection head, \( f(x_i) \) and \( f(x_j) \) form and maintain more information, and the hidden layers before projection are often richer representations. Therefore, we can explicitly consider that when the sample \( x \) is remembered, the basic encoder's output for \( x_i \) and \( x_j \) will be as similar as possible, so the \( loss() \) can be represented as:
\begin{eqnarray}
	\label{eq12}
	loss\left(x_i,x_j\right)=\Vert f(x_i)-f(x_j)\Vert_2
\end{eqnarray}

The notation \( \lVert \cdot \rVert_2 \) represents the L2 norm. The construction of the loss function \( \text{loss}() \) based on \( f(x_i) \) and \( g(f(x_i)) \) will be discussed in Section~\ref{4.3}.

However, for the first term of Equation~(\ref{eq11}), we found that in datasets with a sufficiently large number of samples, for most samples, this term is almost uniform. As shown in Figure~(\ref{fig4}) and (\ref{fig5}), we conducted two experiments: a) We randomly selected 50 samples \( \{x^{(1)}, \ldots, x^{(i)}, \ldots, x^{(50)}\} \) and examined the $\begin{matrix}loss(x_{i},x_{j})\\A(D-B)\\\end{matrix}$ for each sample. b) To explore the impact of the number of sampled data on the original dataset, we randomly sampled a certain proportion of samples from the entire dataset, denoted as \( B_0 = r \times D_0 \), where \( B \) is the set of samples, and we investigated the relationship between the value of $\mathbb{E}_{x\in B}\begin{matrix}loss(x_{i},x_{j})\\A(D-B)\\\end{matrix}$ and the proportionality factor \( r \).
\begin{figure}[h]
	\centering
	\includegraphics[width=0.6\textwidth]{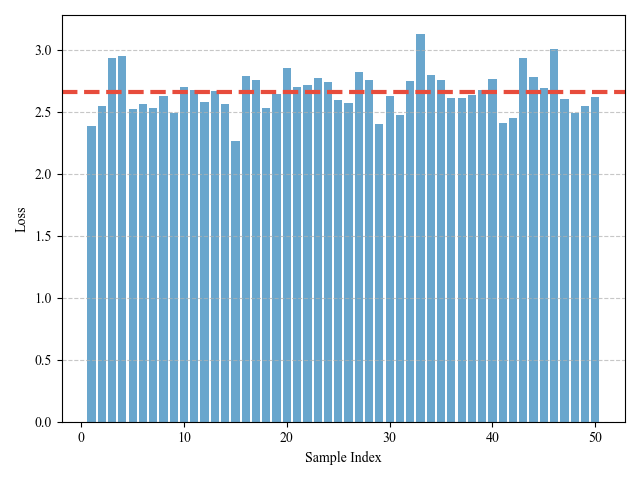}
	\caption{In CIFAR10-ResNet50, the loss statistics of a single sample when randomly removed were conducted over 50 experiments, with the red line representing the mean loss of the 50 samples.
	}
	\label{fig4}
\end{figure}

\begin{figure}[h]
	\centering
	\includegraphics[width=0.6\textwidth]{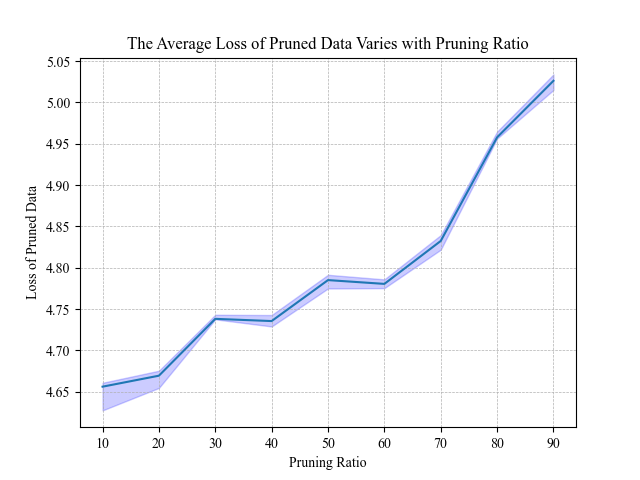}
	\caption{Using the CIFAR10 dataset and the ResNet50 model, static pruning was performed by randomly sampling data. The figure reports the average loss of the pruned data at different pruning rates, with each data point run 5 times, and the shaded area represents the error range.
	}
	\label{fig5}
\end{figure}
According to the experiments, when a single sample is removed, the variation of \( \text{loss}(x_i, x_j) \) with \( A(D - x) \) is not significant. We attribute this to the model having already learned sufficient knowledge from most of the data, and thus is not sensitive to these individual samples. Therefore, we can fully consider that for different data, the variation of $\begin{matrix}loss(x_{i},x_{j})\\A(D-B)\\\end{matrix}$ can be neglected. Moreover, as the sampling ratio \( r \) increases, this value of $\begin{matrix}loss(x_{i},x_{j})\\A(D-B)\\\end{matrix}$ becomes more pronounced, which makes the first term in Equation~(\ref{eq11}) non-negligible. In extreme cases, the performance of the data pruning method will drop sharply, reflecting the difficulty of data pruning in CL at extreme pruning rates, which will be our future work target.

Based on the above analysis, we disregard the first term of Equation~(\ref{eq11}), and drawing inspiration from the addition of a memory term to the scoring function in SL, we construct the scoring function as follows:
\begin{eqnarray}
	\label{eq13}
	score\left(\left(x^{\left(i\right)},y^{\left(i\right)}\right)\right)=NX\left(g\left(f\left(x_i\right)\right),g\left(f\left(x_j\right)\right)\right)-\beta loss\left(x_i,x_j\right)
\end{eqnarray}
Where \( NX(\cdot) \) represents the NT-Xent loss, and \( \beta \) is an adjustable hyperparameter.

The data pruning method in SSL is presented in Algorithm~\ref{alorithm 2}.
\begin{algorithm}[h]
	\caption{EMP in SSL}
	\label{alorithm 2}
	\begin{algorithmic}[1]
		\Require Dataset \( D \), Epoch \( T \), Pruning Ratio \( s \), Initial Model \( \theta \), Retained Dataset \( \hat{D}_0 = D \).
		
		\hrule
		\vspace{0.1cm}
		\For{$t$ from 0 to $T-1$}
		\If{$t!=0$}
		\State Select the retained dataset \( \hat{D}_t \) based on the pruning rate \( s \) and sample scores.
		
		\Else
		\State $\hat{D}_t=D$
		\EndIf
		\State Get train sequence $\hat{D}_t=\{B_0,B_1,,,B_{b_t-1}\}$
		\For{$j$ from 0 to $b_t-1$}
		\State Get NX-loss $NX\left(g\left(f\left(B_j\right)\right),g\left(f\left(B_j\right)\right)\right)$
		\State Update score with Equation~(\ref{eq13})
		\State Update model with optimizer
		\EndFor
		\EndFor
		\vspace{0.1cm}
	\end{algorithmic}
\end{algorithm}

The limitation of static pruning lies in the fact that methods like Yang et al.\cite{yangdataset} rely on proxy models. This introduces architectural bias - the agent's "perception" of the importance of data may not be transferred to the target model, thereby undermining generalization.

EMP continuously scores and trims the data at each training checkpoint (Algorithms \ref{alorithm 1} and \ref{alorithm 2}). By evaluating samples based on the current model state, it retains the data most relevant to the model's instantaneous learning stage. This ensures that the training subset evolves along with the model and maintains consistency with its generalization trajectory.

The dynamic approach of EMP transforms pruning from a one-time filtering to a continuous feedback loop between the model state and data utility. By aligning subset selection with the real-time optimization dynamics of the model and explicitly enhancing memory, emp avoids the distribution mismatch and representation bias that plague static methods. This is crucial for high pruning rates, as static subsets are increasingly inconsistent with the evolving generalization requirements of the model.

\section{Experiment}
In the following sections, we validate the effectiveness of our theoretical results and the proposed dataset pruning method through experiments. In Section~\ref{4.1}, we compare EMP with several other baseline methods on SL, including image classification tasks and a range of natural language tasks. In Section~\ref{4.2}, we verify the high efficiency of EMP on SSL. In Section~\label{ 4.3}, we conduct a series of ablation experiments to validate the effectiveness of our theoretical results. In Section~\ref{4.4}, we analyze the generalization performance of EMP using a one-dimensional linear interpolation method. It is worth noting that other baseline methods include static pruning: CD \cite{ref30}, Herding \cite{ref31}, DeepFool \cite{ref32}, Last Confidence \cite{ref48}, Glister \cite{ref51}, and dynamic pruning: InfoBatch \cite{ref4}, \(\epsilon\)-greedy, and UCB \cite{ref5}. It is important to note that EMP is also a dynamic pruning method, so we mainly compare it with other dynamic pruning methods. To eliminate the influence of randomness, each of our experiments was run 5 times, and the average was taken. Our dataset introduction and experimental details are in the Appendix~\ref{app_2} and \ref{app_3}.

\subsection{Performance of EMP in SL}\label{4.1}
\textbf{CIFAR.} We conducted comparisons on both CIFAR-10 and CIFAR-100. In this work, similar to InfoBatch \cite{ref4}, we employed an annealing technique, specifically, using the entire dataset for training towards the end of the training process. More details are shown in Section~\ref{4.3}. We presented results from other methods in recent years in Table~(\ref{table1}). EMP achieved lossless performance on CIFAR-100, even slightly higher than the baseline. Notably, EMP significantly surpassed previous static pruning methods and led other dynamic pruning methods at high pruning rates. Specifically, on the CIFAR-100 dataset, EMP exceeded other methods by at least 1.3\% under any pruning rate, and even by at least 2.1\% at a 70\% pruning rate.
\setlength{\tabcolsep}{2.5pt}
\begin{table}[h]
	\centering
	\caption{Comparison of EMP with other dataset pruning methods on ResNet-18. Pruning methods are categorized into static pruning and dynamic pruning. "Random*" refers to random dynamic dataset pruning. "Baseline" refers to the performance without pruning.}
	\label{table1}
	\begin{tabularx}{\textwidth}{Xccccccc}  
		\toprule
		& Dataset & \multicolumn{3}{c}{CIFAR10} & \multicolumn{3}{c}{CIFAR100} \\
		\cmidrule(lr){1-2} \cmidrule(lr){3-5} \cmidrule(lr){6-8}
		&	Pruning Ratio \% & 30 & 50 & 70 & 30 & 50 & 70 \\
		\cmidrule(lr){1-2} \cmidrule(lr){3-5} \cmidrule(lr){6-8} 
		\multirow{11}{*}{\begin{sideways}Static\end{sideways}}
		&Random & 94.6 & 93.3 & 90.2 & 73.8 & 72.1 & 69.7 \\
		&CD(\cite{ref30}) & 95.0 & 94.3 & 90.8 & 74.2 & 72.3 & 70.3 \\
		&Herding(\cite{ref31})&92.2&88.0&80.1&73.1&71.8&69.6\\
		&K-Center(\cite{ref47})&94.7&93.9&90.9&74.1&72.2&70.2\\
		&Least Confidence(\cite{ref48}) & 95.0 & 94.5 & 90.3 & 74.2 & 72.3 & 69.8 \\
		&Margin(\cite{ref48}) & 94.9 & 94.3 & 90.9 & 74.0 & 72.2 & 70.2 \\
		&GraNd-4(\cite{ref49}) & 95.3 & 94.6 & 91.2 & 74.6 & 71.4 & 68.8 \\
		&DeepFool(\cite{ref32}) & 95.1 & 94.1 & 90.0 & 74.2 & 73.2 & 69.8 \\
		&Craig(\cite{ref50}) & 94.8 & 93.3 & 88.4 & 74.4 & 71.9 & 69.7 \\
		&Glister(\cite{ref51})  & 95.2 & 94.0 & 90.9 & 74.6 & 73.2 & 70.4 \\
		&EL2N-2(\cite{ref3}) & 94.4 & 93.2 & 89.8 & 74.1 & 71.0 & 68.5 \\
		&EL2N-20(\cite{ref3}) & 95.3 & 95.1 & 91.9 & 77.2 & 72.1 & - \\
		&DP(\cite{ref33}) & 94.9 & 93.8 & 90.8 & 77.2 & 73.1 & - \\
		\midrule
		\multirow{5}{*}{\begin{sideways}Dynamic\end{sideways}}
		&Random* & 94.8 & 94.5 & 93.0 & 77.3 & 75.3 & 74.1 \\
		&$\epsilon$-greedy(\cite{ref5})  & 95.2 & 94.9 & 94.1& 76.4 & 74.8 & 75.1 \\
		&UCB(\cite{ref5}) & 95.3 & 94.7 & 93.9 & 77.3 & 75.3 & 74.8 \\
		&InfoBatch(\cite{ref4}) & \textbf{95.6} & 95.1 & 94.7 & 78.2 & 78.1 & 76.5 \\
		&EMP(ours) &95.37\textsubscript{$\pm0.11$}  &\textbf{95.23}\textsubscript{$\pm0.15$}  &\textbf{95.04}\textsubscript{$\pm0.08$}  &\textbf{79.53}\textsubscript{$\pm0.21$}  &\textbf{79.44}\textsubscript{$\pm0.18$} &\textbf{78.63}\textsubscript{$\pm0.42$}  \\
		\midrule
		&Baseline & \multicolumn{3}{c}{95.6$\pm0.1$} & \multicolumn{3}{c}{79.8$\pm0.04$} \\				
		\bottomrule
	\end{tabularx}
\end{table}

Furthermore, Figures~(\ref{fig6}) and (\ref{fig7}) report the variation curves of EMP under different pruning rates. Surprisingly, at low pruning rates, EMP does not perform as well as expected. We believe that at low pruning rates, the model can utilize a larger amount of the dataset through LFL to learn general knowledge and demonstrate strong generalization capabilities. At this time, the model does not have a high demand for repeated learning of key samples. However, as the pruning rate increases and the number of times each sample is trained becomes limited, LFL shows limited performance. In contrast, EMP can extract key samples for the model to learn, strengthen the model's memory of general knowledge, and thus achieve efficient performance.
\begin{figure}[tb]
	\centering
	\includegraphics[width=\textwidth]{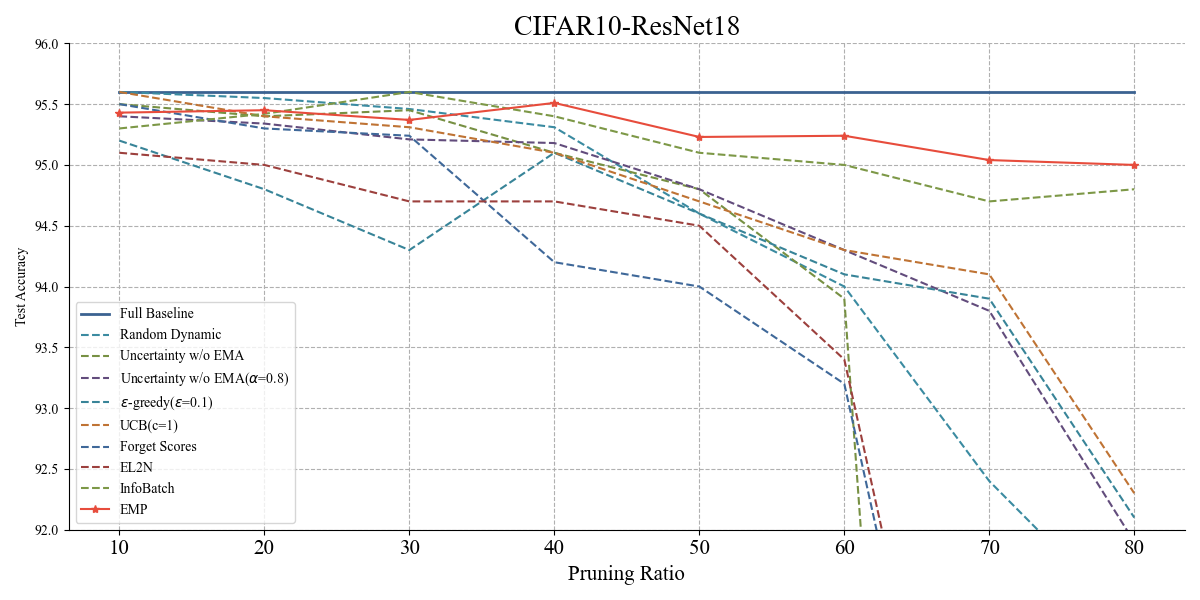}
	\caption{On CIFAR10-ResNet18, a comparison of different pruning methods at various pruning rates is conducted.
	}
	\label{fig6}
\end{figure}
\begin{figure}[tb]
	\centering
	\includegraphics[width=\textwidth]{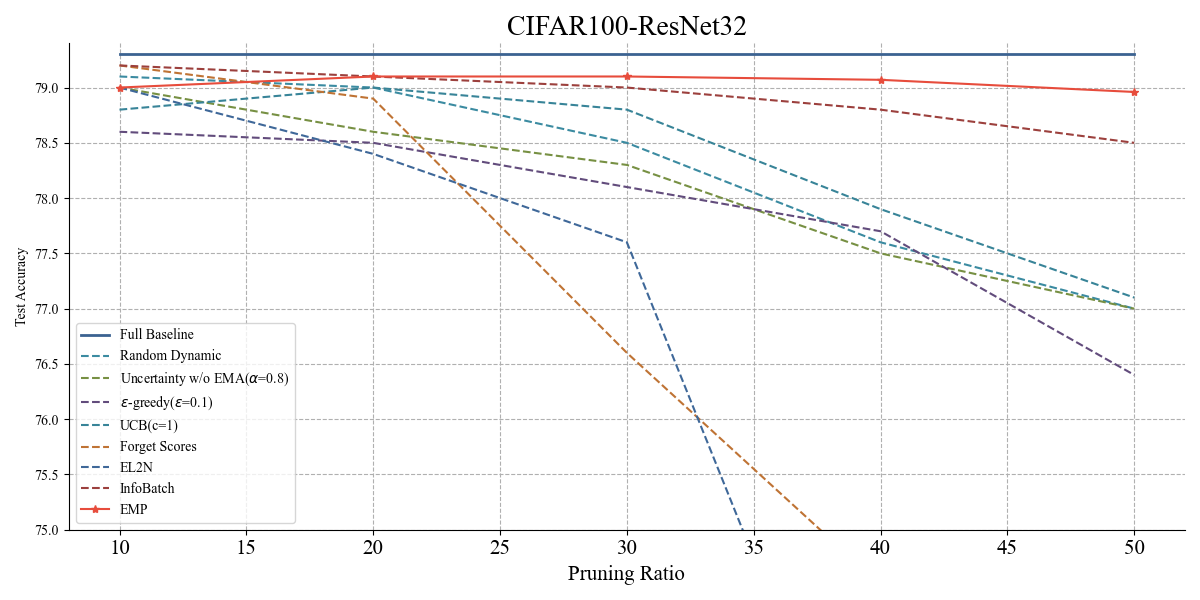}
	\caption{On CIFAR100-ResNet32, a comparison of different pruning rates and various pruning methods is made.
	}
	\label{fig7}
\end{figure}

\textbf{ImageNet-1K.} To explore the performance of EMP on a large-scale dataset, we trained ResNet50 on ImageNet-1K, with results as shown in Table~\ref{table2}. Notably, the UCB and Greedy methods demonstrated performance similar to that of dynamic random pruning, and at high pruning rates, dynamic random pruning surpassed them. We believe that on ImageNet-1K, due to some noisy samples \cite{ref34}, they are difficult to fit correctly, hence LFL tends to select these noisy samples, leading to suboptimal performance of LFL. As expected, on the challenging ImageNet-1K dataset with high fitting difficulty, EMP showed efficient performance both at low and high pruning rates.

\begin{table}[tb]
	\centering
	\caption{EMP was evaluated alongside other dynamic pruning methods on ImageNet-1k using ResNet50. "Baseline" refers to the performance without pruning. Random represents static random pruning, and Random* represents dynamic random pruning.}
	\label{table2}
	\begin{tabular}{ccc}
		\toprule
		\multicolumn{3}{c}{ImageNet-1k-ResNet50}\\
		\midrule
		Pruning Ratio \% & 30 & 70 \\
		\midrule
		Random&72.15&68.52\\
		Random*&73.06&70.54\\
		UCB&72.97&68.59\\
		$\epsilon$-greedy&74.10&69.28\\
		InfoBatch&74.59&72.51\\
		
		EMP(ours)&\textbf{74.79}\textsubscript{$\pm$0.15}&\textbf{72.77}\textsubscript{$\pm$0.11}\\
		\midrule
		Baseline&\multicolumn{2}{c}{75.36\textsubscript{$\pm$0.15}}\\
		\bottomrule
	\end{tabular}
\end{table}

\textbf{GLUE.} In addition to image classification tasks, natural language understanding tasks are also within the scope of our investigation. Specifically, we used the BERT-base pre-trained model \cite{ref24} for fine-tuning on the GLUE benchmark \cite{ref35}. The experimental results are reported in Table~(\ref{table3}), and it is evident that EMP leads in most cases, even outperforming the case without pruning. Notably, at pruning rates of 50\% and 70\%, for datasets with a larger number of samples such as MNLI (393K) and QQP (363K), EMP demonstrates good performance. Specifically, at a 70\% pruning rate, EMP leads other methods by at least 1\% on the MNLI dataset. On the dataset QNLI with a moderate number of samples (108K), EMP also shows superior performance, exceeding other methods by 0.6\%-1.9\% at a 70\% pruning rate. This proves that our method is also effective on natural language tasks.
\setlength{\tabcolsep}{1.6pt}
\begin{table}[h]
		\caption{Comparison of Dynamic Pruning Methods on GLUE}
		\label{table3}
		\begin{tabularx}{\textwidth}{clcccccccc}
			\toprule
			&Dataset & RTE & CoLA & SST-2 & STS-B & MRPC & QNLI & MNLI & QQP \\
			&Eval Metric&Acc&Matthew's Cor.&Acc&Pearson Cor.&Acc&Matched Acc.&Acc&Acc\\
			\midrule
			&Whole dataset & 67.08 & 57.61 & 92.78 & 88.76 & 86.24 & 89.15 & 84.37 & 91.10 \\
			\midrule
			\multirow{4}{*}{\begin{tabular}[c]{@{}c@{}}30\%\end{tabular}}
			&InfoBatch & \textbf{67.06} & \textbf{59.08} & 92.89 & 88.26 & 84.38 & \textbf{91.26} & \textbf{84.40} & 91.32 \\
			&$\epsilon$-greedy & 64.78 & 58.55 & \textbf{93.30} & 88.59 & 84.94 & 90.89 & 84.31 & 91.44 \\
			&UCB & 64.33 & 57.41 & 93.08 & 88.63 & 84.78 & 90.14 & 84.25 & 91.54 \\
			&EMP(ours) & 66.33 & 58.96 & 93.18 & \textbf{88.79} & \textbf{85.56} & 91.24 & 84.27 & \textbf{91.61} \\
			\midrule
			\multirow{4}{*}{\begin{tabular}[c]{@{}c@{}}50\%\end{tabular}}
			&InfoBatch & 68.21 & 58.12 & 92.86 & 88.85 & 84.94 & 90.55 & 84.24 & 91.35 \\
			&$\epsilon$-greedy & 68.17 & \textbf{60.39} & 93.31 & 88.26 & 85.33 & 90.11 & 84.14 & 91.25 \\
			&UCB & \textbf{68.88} & 57.61 & \textbf{93.98} & 88.10 & 82.61 & 88.64 & 84.17 & 90.69 \\
			&EMP(ours) & 68.43 & 59.17 & 93.74 & \textbf{89.11} & \textbf{85.61} & \textbf{91.22} & \textbf{84.25} & \textbf{91.53} \\
			\midrule
			\multirow{4}{*}{\begin{tabular}[c]{@{}c@{}}70\%\end{tabular}}
			&InfoBatch & 64.67 & 58.21 & 92.52 & 87.38 & 81.65 & 89.12 & 82.18 & 91.02 \\
			&$\epsilon$-greedy & 64.94 & 56.59 & 92.63 &\textbf{88.49} & 85.09 & 89.54 & 82.53 & 90.68 \\
			&UCB & 63.66 & 55.69 & 92.86 & 87.25 & 45.83 & 88.23 & 79.94 & 89.21 \\
			&EMP(ours) & \textbf{65.67} & \textbf{58.47} & \textbf{93.11} & 88.41 & \textbf{85.75} & \textbf{90.19} & \textbf{83.10} & \textbf{91.33} \\
			\bottomrule
		\end{tabularx}
\end{table}

\subsection{Performance of EMP in SSL}\label{4.2}
Recently, Large Language Models (LLMs) have demonstrated significant performance \cite{ref25,ref23}, pretraining on vast datasets to obtain models with general knowledge. Visually, various visual pretrained models \cite{ref36,ref37} are also gaining popularity. To explore the performance of EMP in model pretraining, we experimentally verified it on CL using the SimCLR framework \cite{ref15}. In this work, we specifically pruned the data during the pretraining phase to investigate the practical effectiveness of EMP.

We conducted explorations on the CIFAR10/100 datasets, with results shown in Table~(\ref{table4}). Specifically, for the CIFAR10 dataset, EMP outperformed other methods at different pruning ratios, especially at pruning ratios of 50\% and 70\%, where EMP achieved accuracies of 82.16\% and 80.59\%, respectively, significantly higher than other methods. In the CIFAR100 dataset, EMP also demonstrated superior performance across all pruning ratios, with accuracies of 56.37\%, 51.99\%, and 48.21\% at pruning ratios of 30\%, 50\%, and 70\%, respectively, showing a notable improvement over other methods such as InfoBatch.

\begin{table}[h]
	\centering
	\caption{Using the SimCLR framework on the CIFAR10/100 dataset with the ResNet50 model, different data pruning methods are applied during the model's pre-training phase, and the performance after fine-tuning is presented. "Baseline" refers to the performance without pruning.}
	\label{table4}
		\begin{tabular}{cccccccc}
			\toprule
			& Dataset & \multicolumn{3}{c}{CIFAR10} & \multicolumn{3}{c}{CIFAR100} \\
			\cmidrule(lr){1-2} \cmidrule(lr){3-5} \cmidrule(lr){6-8}
			&	Pruning Ratio \% & 30 & 50 & 70 & 30 & 50 & 70 \\
			\cmidrule(lr){1-2} \cmidrule(lr){3-5} \cmidrule(lr){6-8}

			&$\epsilon$-greedy(\cite{ref5})  & 82.01 &79.98 &76.91 & 53.11 & 49.89 & 42.72 \\
			&UCB(\cite{ref5}) &81.37 & 79.24 & 71.85 & 52.50 & 49.34 &42.15 \\
			&InfoBatch(\cite{ref5}) & 83.09 & 81.39 & 79.05 & 55.21 & 50.56 & 46.01 \\
			&EMP(ours) &\textbf{83.19}\textsubscript{$\pm0.33$}  &\textbf{82.16}\textsubscript{$\pm0.37$}  &\textbf{80.59}\textsubscript{$\pm0.28$}  &\textbf{56.37}\textsubscript{$\pm0.41$}  &\textbf{51.99}\textsubscript{$\pm0.52$} &\textbf{48.21}\textsubscript{$\pm0.47$}  \\
			\midrule
			&Baseline & \multicolumn{3}{c}{84.67\textsubscript{$\pm0.17$}} & \multicolumn{3}{c}{56.92\textsubscript{$\pm0.44$}} \\				
			\bottomrule
		\end{tabular}
\end{table}

Overall, in CL, the EMP method is capable of maintaining high model performance while reducing the amount of training data, especially outstanding at high pruning ratios. This indicates that the EMP method has strong robustness and effectiveness in dynamic dataset pruning.

\subsection{Ablation Experiment}\label{4.3}
\textbf{Annealing Technique.} In SSL, for the CIFAR10/100 datasets, we employed an annealing technique. Specifically, we determined a hyperparameter \( \alpha \) to control the degree of annealing, and \( \alpha \) is defined as follows:
\begin{eqnarray}
	\alpha=\frac{annealing\ epochs}{total\ epochs}
\end{eqnarray}

It can be seen that the degree of annealing increases with the increase of \( \alpha \), but this also brings corresponding additional overhead. We believe that when the model has acquired sufficient knowledge in the non-annealing phase, the positive effects brought by the annealing phase will be less significant. In other words, the annealing phase not only exists as a technique to improve accuracy but also serves as an effective measure of the validity of the data pruning algorithm. Similarly, InfoBatch \cite{ref4} also uses this technique, and we mainly compare it with this method. Our results are reported in Figure~(\ref{fig8}). It is not difficult to see that at lower pruning rates, both methods are robust to annealing, and when the pruning rate increases, the shortcomings of LFL become apparent. Although EMP's accuracy drops without the annealing phase, InfoBatch's drop is more dramatic. Therefore, we can conclude that under any circumstances, EMP is robust to the annealing technique, which means that EMP enables the model to remember more samples and gain more knowledge during the early to mid-training phase.

\begin{figure}[h]
	\centering
	\includegraphics[width=0.7\textwidth]{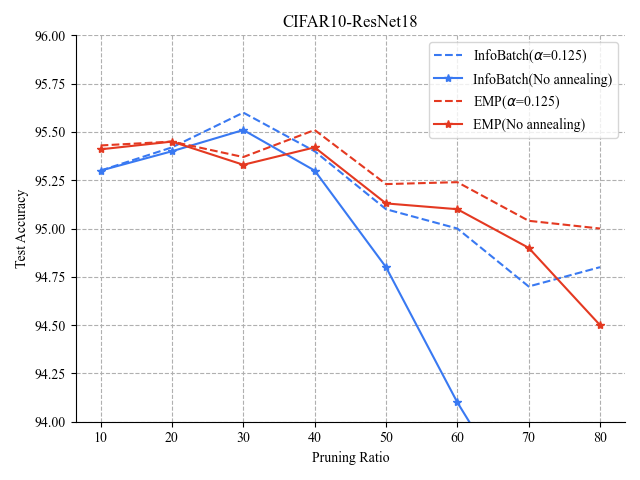}
	\caption{A comparison of accuracy with and without the use of annealing technology at different pruning rates is presented, in which the InfoBatch method also employs annealing technology.
	}
	\label{fig8}
\end{figure}

\textbf{Memory Term Construction.} In Section~\ref{3.4}, we posit that the hidden layers before projection often contain richer representations, hence utilizing the pre-projection \(f(x_i)\) and \(f(x_j)\) to construct the memory term. In this work, we construct memory terms for both the hidden layer outputs before and after projection. Specifically, we compare the performance of two types of memory terms, \(loss_f\) and \(loss_g\), which are represented as follows:
\begin{eqnarray}
	loss_f=\Vert f(x_i)-f(x_j)\Vert_2
\end{eqnarray}
\begin{eqnarray}
	loss_g=\Vert g(f(x_i))-g(f(x_j))\Vert_2
\end{eqnarray}

This corresponds to Equation~(\ref{eq11}) in Section~\ref{3.4}. Our results are reported in Table~(\ref{table5}). It can be observed that using \(loss_g\) as the memory term results in a decrease in performance, which confirms our previous idea. In other words, since \(f(x_i)\) and \(f(x_j)\) form and maintain more information, constructing the memory term with the more informative \(f(x_i)\) and \(f(x_j)\) is more effective.

\textbf{Label Noise.} Label noise is a common challenge in real-world applications. Enhancing the robustness of algorithms to label noise during the data pruning process is a key issue \cite{ref17}. In this section, we investigate the robustness of EMP to label noise by conducting comparative experiments on CIFAR10 and CIFAR100 with synthetic label noise. Specifically, we introduce label noise \cite{ref38} into the two datasets by randomly replacing the labels of a certain percentage of training data with all possible labels, which mimics the real-life scenario where researchers may misjudge images \cite{ref39}. Our results are reported in Figure~(\ref{fig9}). It can be observed that EMP still maintains high performance even with the addition of 20\% noise and leads other methods. Notably, the Greedy and UCB methods are not robust to noise, and are even outperformed by random dynamic pruning. We believe that these noisy data are difficult to fit during training, hence their scores remain high, leading to the retention of noisy data and resulting in low model performance. Overall, EMP maintains high performance in the presence of noise, indicating that EMP indeed retains key or general samples, thus demonstrating robustness to noise.

\begin{table}[h]
	\centering
	\caption{In contrastive learning, a performance comparison of constructing memory terms for different hidden layers.}
	\label{table5}
		\begin{tabular}{cccccccc}
			\toprule
			& Dataset & \multicolumn{3}{c}{CIFAR10} & \multicolumn{3}{c}{CIFAR100} \\
			\cmidrule(lr){1-2} \cmidrule(lr){3-5} \cmidrule(lr){6-8}
			&	Pruning Ratio \% & 30 & 50 & 70 & 30 & 50 & 70 \\
			\cmidrule(lr){1-2} \cmidrule(lr){3-5} \cmidrule(lr){6-8}

			&$loss_g$ & 83.11 & 81.52 & 79.22 & 54.85 & 50.40 & 47.58 \\
			&$loss_f$(ours) &\textbf{83.19}\textsubscript{$\pm0.33$}  &\textbf{82.16}\textsubscript{$\pm0.37$}  &\textbf{80.59}\textsubscript{$\pm0.28$}  &\textbf{56.37}\textsubscript{$\pm0.41$}  &\textbf{51.99}\textsubscript{$\pm0.52$} &\textbf{48.21}\textsubscript{$\pm0.47$}  \\
			\midrule
			&Baseline & \multicolumn{3}{c}{84.67\textsubscript{$\pm0.17$}} & \multicolumn{3}{c}{56.92\textsubscript{$\pm0.44$}} \\				
			\bottomrule
		\end{tabular}
\end{table}

\begin{figure}[tb]
	\centering
	\includegraphics[width=\textwidth]{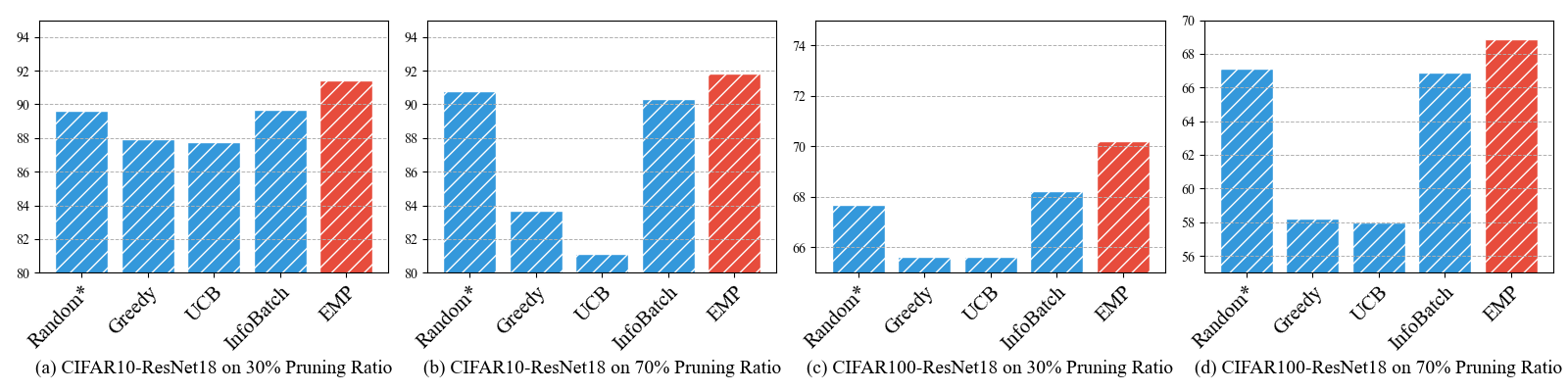}
	\caption{Performance of different methods under the condition of injecting 20\% random label noise into the samples, where "Random*" indicates random dynamic pruning.
	}
	\label{fig9}
\end{figure}

\textbf{Hyperparameter \( \beta \).} In our method, both in SL and SSL, we have added a memory term to the scoring function, which is scaled by the hyperparameter \( \beta \). Our experiments indicate that the best results are achieved when \( \beta \) is set to 5. To explore the impact of \( \beta \) on the experimental results, we conducted experiments with different values of \( \beta \), and the results are reported in Figure~(\ref{fig10}). It is evident that even with different values of \( \beta \), EMP still leads in most cases, demonstrating that the memory term we added can be effective. When \( \beta \) exceeds the value we set, performance declines, which can be interpreted as the model having sufficient memory for these key samples but lacking in the overall information from the dataset, leading to a decrease in performance.
\begin{figure}[tb]
	\centering
	\includegraphics[width=\textwidth]{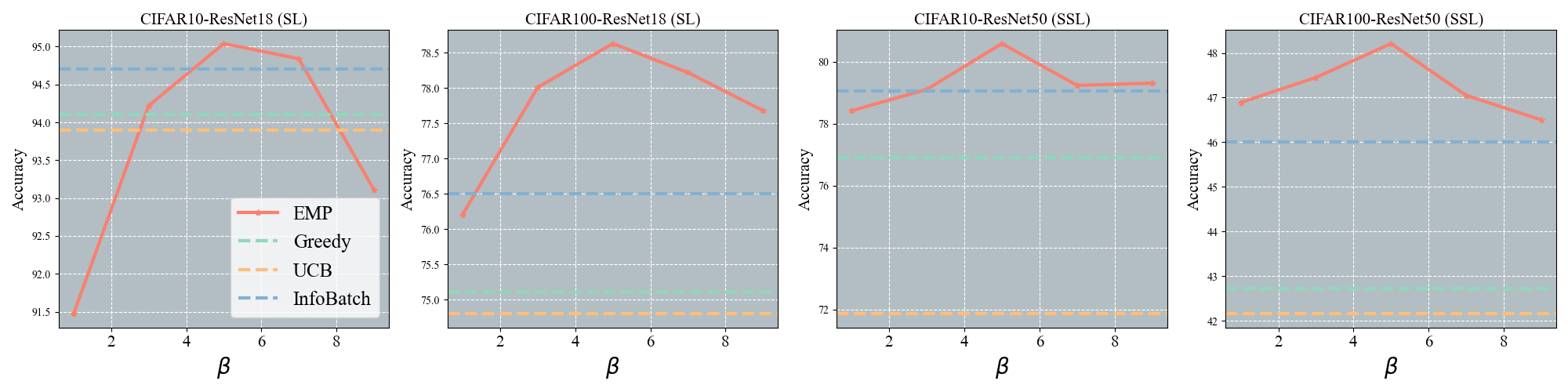}
	\caption{At a 70\% pruning rate, the effect of varying \( \beta \) values on the performance of different tasks is assessed. SL denotes Supervised Learning, and SSL denotes Self-Supervised Learning, where we employ the SimCLR framework in the context of SSL.
	}
	\label{fig10}
\end{figure}

\subsection{Generalization Analysis}\label{4.4}
To further investigate the effectiveness of EMP, we analyze its generalization capability. We use one-dimensional linear interpolation to examine how EMP affects the loss landscape while enhancing the model's memory ability. Previous work \cite{ref41,ref42} suggests that better model generalization is associated with flat minima in the loss landscape. Following the method proposed in \cite{ref40}, we inspect the loss landscape. Specifically, we assess the performance of models with parameters \((1-\epsilon)\theta_{\text{init}} + \epsilon\theta_T\), where \(\theta_{\text{init}}\) is the initial model, and \(\theta_T\) is the converged model after optimization with different dataset pruning algorithms.

Our results are reported in Figure~(\ref{fig11}). The performance of EMP is unexpected; in the loss landscape, EMP not only consistently maintains a lower loss value range but also exhibits the flattest performance when \( \epsilon \) is around 1. Moreover, in the accuracy curve, EMP remains leading, and the accuracy only drops rapidly when \( \epsilon \) approaches 1.1. In contrast to EMP, when we retain the samples with low scores in EMP (EMP-reverse), both the loss landscape and the accuracy curve present an opposite landscape to that of EMP. We have every reason to believe that EMP indeed enhances the model's generalization capability.

\begin{figure}[tb]
	\centering
	\includegraphics[width=\textwidth]{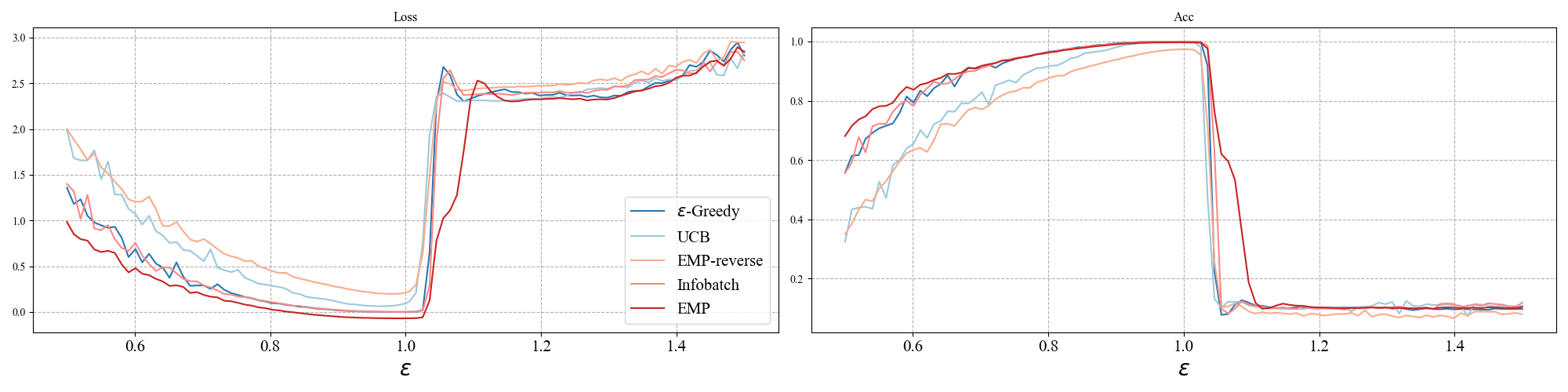}
	\caption{On CIFAR10-ResNet18, with a pruning rate of 50\%, linear interpolation is used to examine the model's loss landscape and accuracy curve.
	}
	\label{fig11}
\end{figure}

\subsection{Class Imbalance}\label{4.5}
To investigate how EMP addresses class imbalance, we conducted an experimental study. The ten classes of CIFAR-10, namely
'airplane', 'automobile', 'bird', 'cat', 'deer;, 'dog', 'frog',
'horse', 'ship', and 'truck', are denoted as C1 to C10 respectively in the following sections of the work. As shown in Figure~ \ref{fig12} and \ref{fig13}, the data subset selected by EMP exhibits significant class imbalance?one class accounts for nearly 20\% of the samples in CIFAR-10. Vysogorets et al.\cite{vysogoretsdrop} point out that the key to effective pruning lies in an appropriate difficulty-based class ratio rather than strict class balance. 

Therefore, we designed a new experiment: using EMP?s sample scoring function while enforcing class balance. The corresponding performance results are reported in Table~\ref{tab1}. It can be observed that the performance of the EMP method significantly deteriorates after class balancing, approaching results comparable to random pruning. This outcome aligns with findings from other methods, indicating that EMP does not enhance performance by resolving class imbalance but rather by identifying samples that genuinely benefit model performance and generalization.

In addition, we designed a new experiment by redistributing class proportions according to the ratios in Table~\ref{tab2} and trained models under this configuration. The results are shown in Figure~ \ref{fig14}. We observe that under high pruning rates, all methods (including EMP) fail to maintain accuracy, as certain sample features cannot be effectively learned. However, we find that EMP outperforms other methods. We argue that in LFL methods (represented by InfoBatch), the model?s higher uncertainty when predicting minority classes leads to more frequent selection of these classes under high pruning rates. Yet, due to missing critical features, these samples often become hard-to-fit. We report this phenomenon in Figure~\ref{fig15}. In contrast, EMP mitigates this limitation via its memory term by selecting more representative samples for training. Through this mechanism, EMP effectively alleviates class imbalance issues.
\begin{figure}[htbp]
	\centering
	\includegraphics[width=0.7\textwidth]{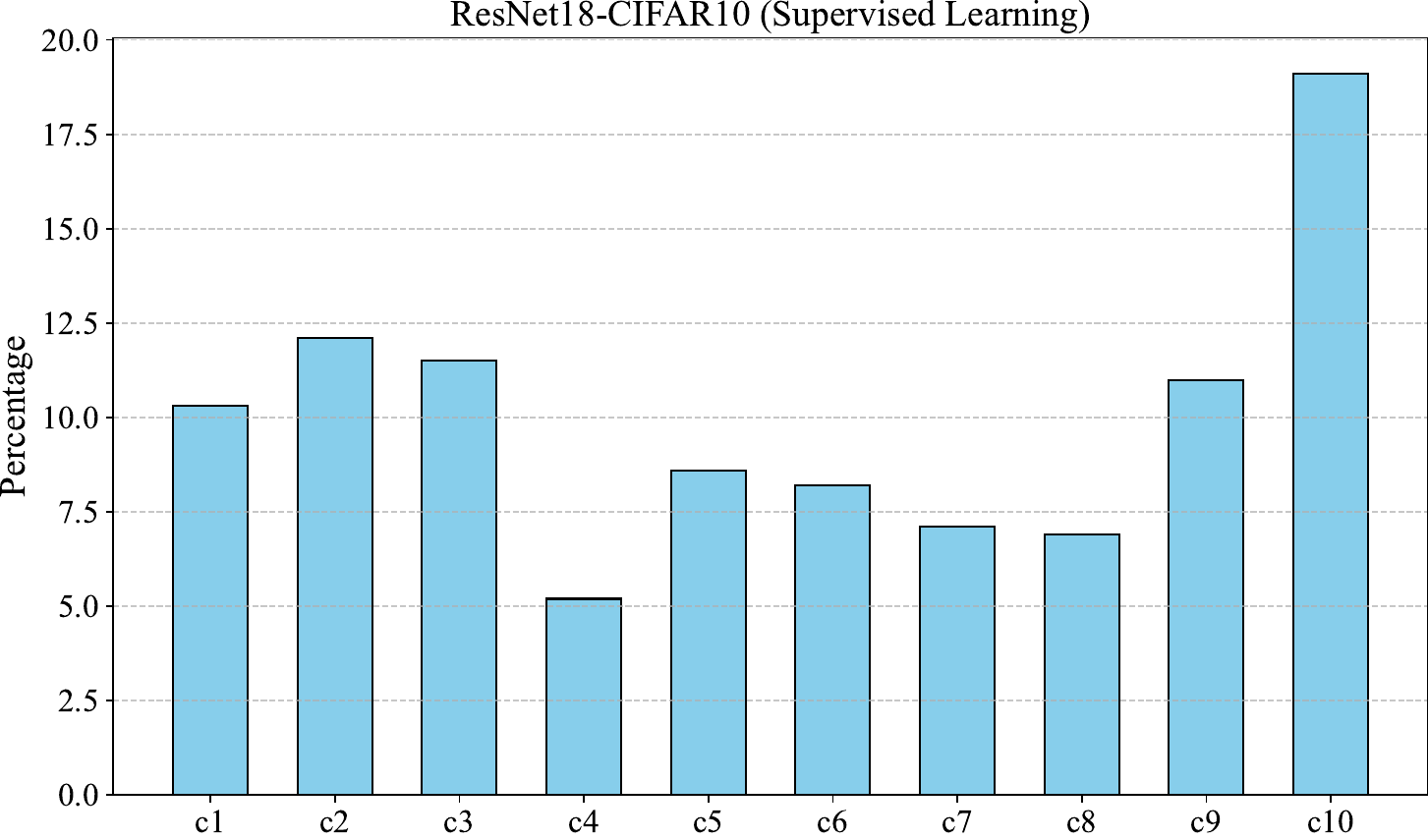}
	\caption{Under supervised learning, we evaluate the percentage of retained samples per class for a ResNet-18 model on the CIFAR-10 dataset at a pruning ratio of 70\%
	}
	\label{fig12}
\end{figure}
\begin{figure}[htbp]
	\centering
	\includegraphics[width=0.7\textwidth]{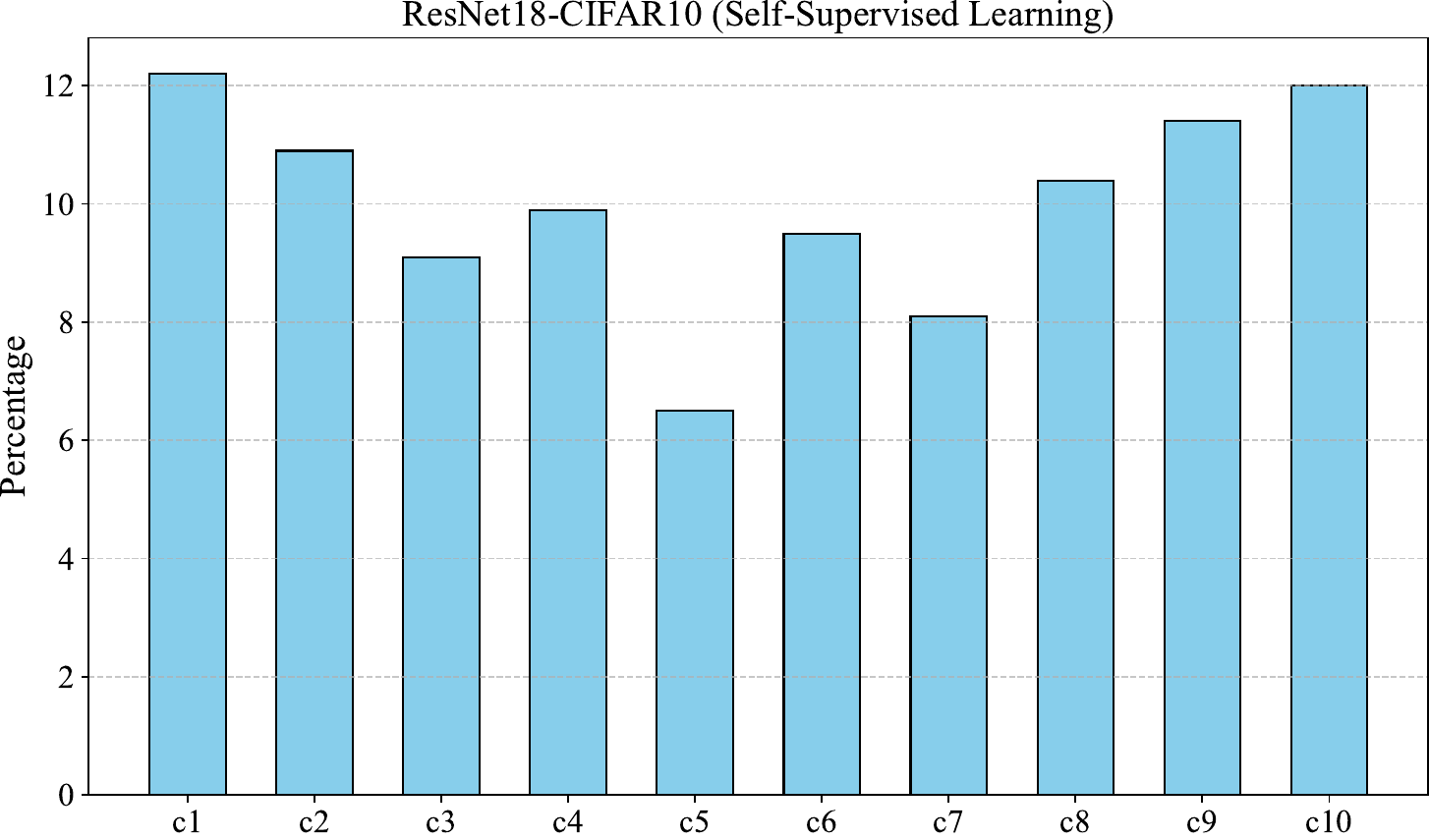}
	\caption{Under self-supervised learning, we evaluate the percentage of retained samples per class for a ResNet-18 model on the CIFAR-10 dataset at a pruning ratio of 70\%
	}
	\label{fig13}
\end{figure}
\begin{figure}[htbp]
	\centering
	\includegraphics[width=0.7\textwidth]{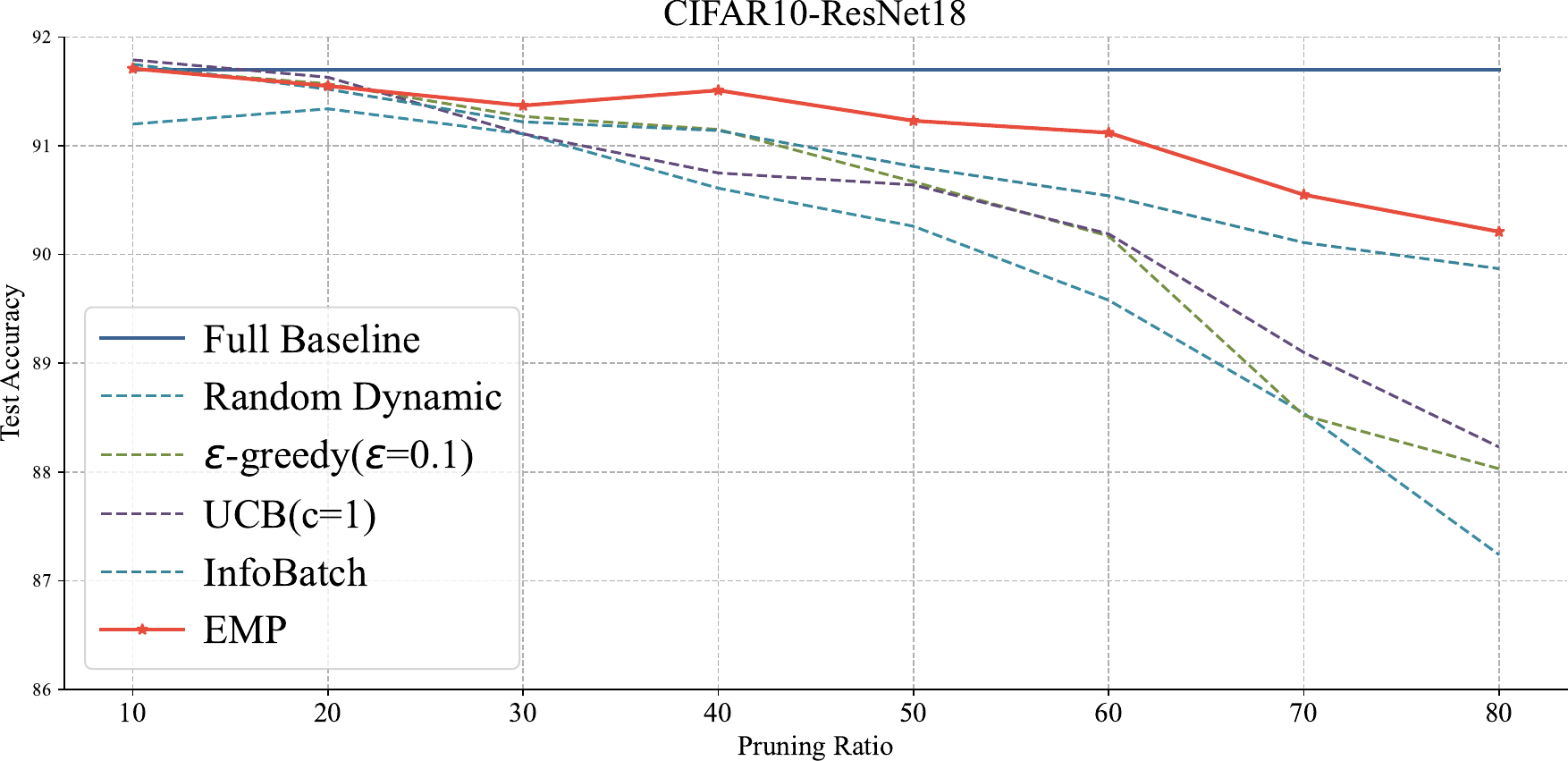}
	\caption{Comparative results of different methods under varying pruning rates using the ResNet10 model and a class-imbalanced CIFAR10 dataset with the class proportions specified in Tab. \ref{tab2}
	}
	\label{fig14}
\end{figure}
\begin{figure}[htbp]
	\centering
	\includegraphics[width=0.7\textwidth]{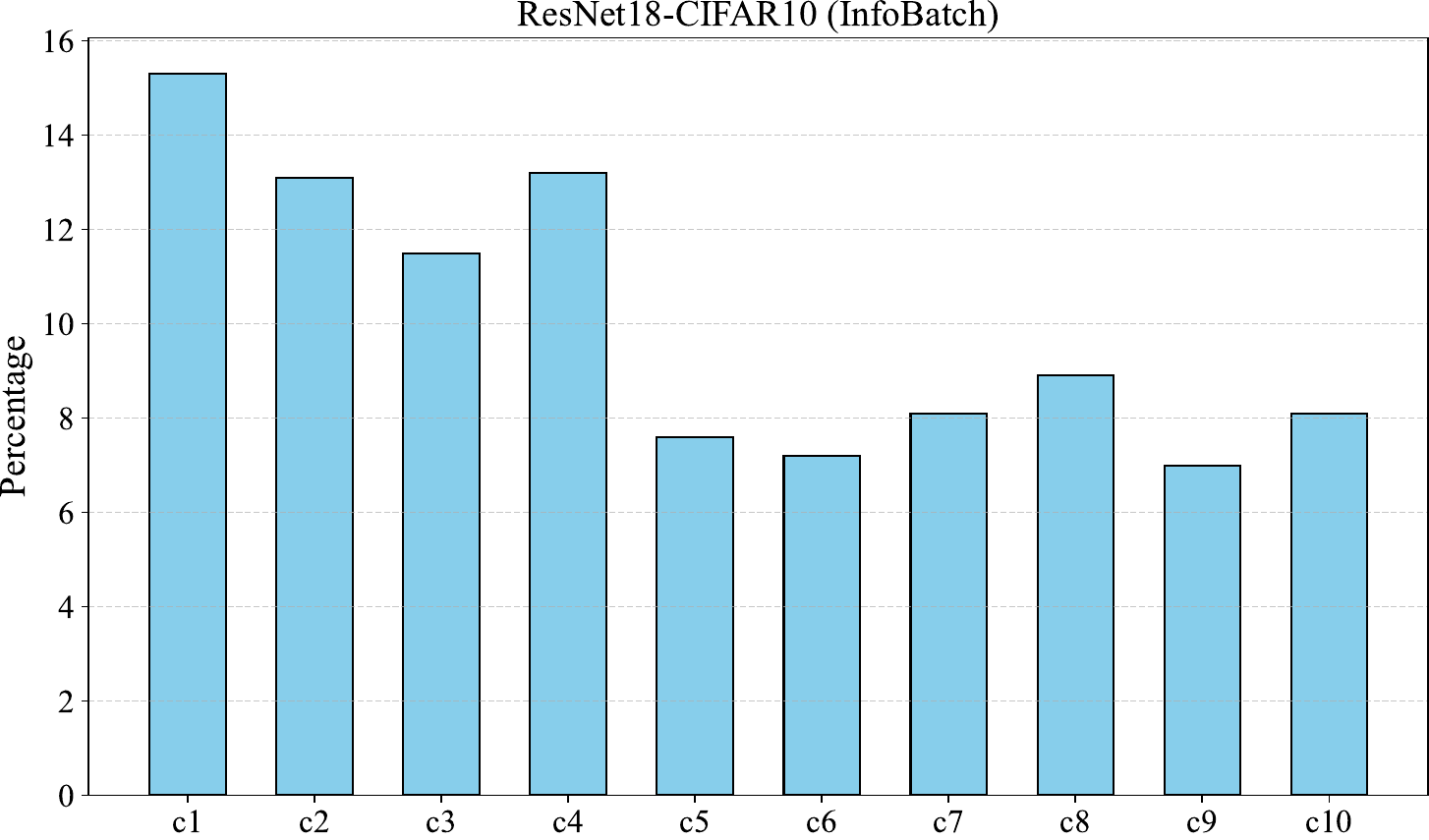}
	\caption{In the InfoBatch method, we evaluated the per-class retention percentage of samples for the ResNet-18 model on the class-imbalanced CIFAR-10 dataset at a pruning rate of 70\%
	}
	\label{fig15}
\end{figure}
\begin{table}[h]
	\centering
	\small
	\caption{Comparative results of EMP\textsubscript{balance} (with enforced equal class proportions) versus other methods on ResNet18 at a 50\% pruning rate}
	\label{tab1}
	\setlength{\tabcolsep}{3mm}{
		\begin{tabular}{c<{\centering}c<{\centering}c<{\centering}c<{\centering}c<{\centering}c<{\centering}c<{\centering}}
			\toprule[1pt]
			&\textbf{EMP\textsubscript{balance}}
			& \textbf{EMP} & \textbf{DP}& \textbf{InfoBatch}&\textbf{Random\textsubscript{dynamic}}\\
			\midrule[0.5pt]
			CIFAR10&94.88&95.23&93.8&95.1&94.5\\
			CIFAR100&72.21&79.44&73.1&78.1&72.1\\
			\bottomrule[1pt]
	\end{tabular}}
\end{table}
\begin{table}[h]
	\centering
	\small
	\caption{The subsampling rates by class for the imbalanced CIFAR10 dataset}
	\label{tab2}
	\setlength{\tabcolsep}{3mm}{
		\begin{tabular}{c<{\centering}c<{\centering}}
			\toprule[1pt]
			\textbf{Class Indices}
			& \textbf{Subsample Rate} \\
			\midrule[0.5pt]
			c1, c2&25\%\\
			c3, c4, c5&50\%\\
			c6, c7& 75\%\\
			c8, c9, c10& 100\%\\
			\bottomrule[1pt]
	\end{tabular}}
\end{table}

\section{Conclusion}
In this work, we proposed the EMP method. We started with the model's memory and theoretically explained the issue of insufficient memory in low-frequency learning. To address this, we identified the memory terms in the model's pre-training and fine-tuning based on theoretical and experimental analysis. EMP solves the problem of insufficient model memory at high pruning rates by adding a memory term to the scoring function. According to experiments, EMP leads the state-of-the-art methods in image classification tasks, natural language understanding tasks, and model pre-training tasks, showing a significant advantage at high pruning rates.

\textbf{Limitations and Future Works.} Previous work \cite{ref45,ref46} has suggested that different layers within a network exhibit varying training dynamics, which also relates to model memory. Shallow layers tend to learn general knowledge, while deeper layers tend to learn task-specific knowledge, with memory primarily occurring in deeper layers \cite{ref9}. We have not yet explored this aspect. Designing or exploring the memory mechanisms and memory terms for structurally different models or different layers within a model is one of our future research directions.
\bmhead{Acknowledgements}
None

\section*{Declarations}

\begin{itemize}
	\item \textbf{Funding. }
	None
	\item  \textbf{Conflict of interest/Competing interests (check journal-specific guidelines for which heading to use). }
	The authors reported no potential conflict of interest.
	\item \textbf{Ethics approval and consent to participate. }Not applicable. 
	
	\item \textbf{Consent for publication. }All authors consent to the publication of this work. 
	
	\item \textbf{Data availability. }The data used in this work are all open-source datasets
	\item \textbf{Materials availability }Not applicable. 
	
	\item \textbf{Code availability. }The code was shown in https://github.com/xiaojinying/EMP
	\item \textbf{Author contribution. }JinYing Xiao conceived the research idea and developed the study design. Ping Li wrote the main manuscript text, developed the methodology, and contributed to software implementation. Jie Nie validated the results and contributed to software development. Zhe Tang prepared the visualizations. All authors reviewed and edited the manuscript.
\end{itemize}

\begin{appendices}
\renewcommand{\theequation}{A.\arabic{equation}}
\section{Proof}\label{app_1}
\setcounter{equation}{0}

In Section~\ref{3.2}, we decomposed \( I(\theta; Y | X) \) and derived the following:
\begin{eqnarray}
	I\left(\theta;Y\middle| X\right)=H\left(\theta\right)+H\left(Y\middle| X\right)-H\left(\theta,Y\middle| X\right)
\end{eqnarray}
Here, we demonstrate that \( H(\theta, Y | X) \leq H(\theta) \), thereby establishing a lower bound for \( I(\theta; Y | X) \).
The proof process is as follows.

The Data Processing Inequality states that for any random variables \(X, Y, Z\), the following is true: 
\begin{eqnarray}
	H\left(Y\mid X\right)\geq H\left(Y\mid X,Z\right)
\end{eqnarray}

Substituting \(Z = \theta\) into the above formula, we get: 
\begin{eqnarray}
	\label{app_eq2}
	H\left(Y\mid X\right)\geq H\left(Y\mid X,\theta\right)
\end{eqnarray}

The chain rule for joint entropy can be written as: 
\begin{eqnarray}
	\label{app_eq3}
	H\left(\theta,Y\mid X\right)=H\left(\theta\mid X\right)+H\left(Y\mid\theta,X\right)
\end{eqnarray}

Combining Equation~(\ref{app_eq2}) with Equation~(\ref{app_eq3}), we can obtain:
\begin{eqnarray}
	H\left(Y\mid\theta,X\right)\le H\left(Y\mid X\right)
\end{eqnarray}

According to the definition of joint entropy, expanding \( H(\theta, Y | X) \) yields:
\begin{eqnarray}
	H\left(\theta,Y\mid X\right)=H\left(\theta\mid X\right)+H\left(Y\mid\theta,X\right)\le H\left(\theta\mid X\right)+H\left(Y\mid X\right)
\end{eqnarray}

Since entropy is non-negative, therefore:
\begin{eqnarray}
	\label{app_eq6}
	H\left(\theta,Y\mid X\right)\le H\left(\theta\mid X\right)
\end{eqnarray}

Since \( H(\theta | X) \) is the uncertainty of \( \theta \) given \( X \), it cannot be greater than the overall uncertainty of \( \theta \), \( H(\theta) \), thus we obtain:
\begin{eqnarray}
	\label{app_eq7}
	H\left(\theta\mid X\right)\le H\left(\theta\right)
\end{eqnarray}

Combining Equation~(\ref{app_eq6}) with Equation~(\ref{app_eq7}), we conclude:
\begin{eqnarray}
	H\left(\theta,Y\mid X\right)\le H\left(\theta\right)
\end{eqnarray}

\section{Dataset Introduction}\label{app_2}
\renewcommand{\thetable}{B.\arabic{table}}
\setcounter{table}{0} 
\subsection{Image Classification Dataset}
\textbf{CIFAR10.} The CIFAR-10 dataset \cite{ref54} consists of 60,000 32x32 color images, divided into 50,000 training images and 10,000 test images. The dataset is divided into 10 classes, including: airplanes, cars, cats, etc. Each class contains 6,000 images.

\textbf{CIFAR100.} The CIFAR-100 dataset is similar in scale and image size to CIFAR-10 but contains 100 classes, with 600 images per class, totaling 60,000 images. These 100 classes are organized into 20 superclasses. The categories in CIFAR-100 are more granular; for example, what is categorized as "cats" in CIFAR-10 falls under the pet superclass in CIFAR-100.

\textbf{ImageNet-1K.} The ImageNet dataset was initially created to support the ImageNet Challenge (ILSVRC) and is one of the largest image databases to date, containing over 14 million annotated images spanning more than 20,000 categories.

\subsection{Natural Language Datasets}
\textbf{GLUE.} The General Language Understanding Evaluation (GLUE) benchmark \cite{ref35} consists of nine natural language understanding (NLU) tasks, all in English. The GLUE tasks include single-sentence tasks like CoLA and SST-2, similarity and paraphrase tasks such as MRPC, STS-B, and QQP, as well as natural language inference tasks including MNLI, QNLI, RTE, and WNLI. Well-known models like BERT \cite{ref24}, XLNet \cite{ref52}, RoBERTa \cite{ref53}, ERINE, T5, and others are tested on this benchmark. Specific details of each dataset are reported in Table~(\ref{app_table1}).

\begin{table}[h]
	\centering
	\caption{GLUE description and statistics}
	\label{app_table1}
	\begin{tabular}{lrrlll}
		\toprule
		Corpus & |Train| & |Test| & Task & Metrics & Domain \\
		\midrule
		\multicolumn{6}{c}{Single-Sentence Tasks} \\ 
		\midrule
		CoLA & 8.5k & 1k & acceptability &  Matthews corr. & misc. \\
		
		SST-2 & 67k & 67k & sentiment &  acc. & movie reviews \\
		
		\midrule
		\multicolumn{6}{c}{Similarity and Paraphrase Tasks} \\ 
		\midrule
		MRPC & 3.7k & 1.7k & paraphrase &  acc./F1 & news \\
		STS-B & 7k & 1.4k & sentence similarity &  Pearson/Spearman corr. & misc. \\
		QQP & 364k & 391k & paraphrase &  acc./F1 & social QA questions \\
		\midrule
		\multicolumn{6}{c}{Inference Tasks} \\ 
		\midrule
		MNLI & 393k & 20k & NLI &  matched acc./mismatched acc. & misc. \\

		QNLI & 105k & 5.4k & QA/NLI &  acc. & Wikipedia \\
		
		RTE & 2.5k & 3k & NLI &  acc. & Wikipedia \\
		
		WNLI & 634 & 146 & coreference/NLI &  acc. & fiction books \\
		\bottomrule
	\end{tabular}
\end{table}

\begin{table}[h]
	\centering
	\caption{In supervised learning, the hyperparameters we used on the CIFAR dataset.
	}
	\label{app_table2}
	
	\begin{tabular}{|c|c|c|c|c|c|c|c|}
		\hline
		Epoch & lr &batch size&momentum&weight decay&optimizer&max lr&lr scheduler \\
		\hline
		200&0.2&128&0.9&5e-4&LARS(\cite{ref55})&5.2&OneCycle(\cite{ref56})\\
		\hline
	\end{tabular}
\end{table}
\begin{table}[h]
	\centering
	\caption{The hyperparameters we used in the ImageNet-1K experiments.}
	\label{app_table3}
	
	\begin{tabular}{|c|c|c|c|c|c|}
		\hline
		Epoch & lr &batch size&momentum&weight decay&optimizer \\
		\hline
		90&0.1&256&0.9&1e-4&SGD\\
		\hline
	\end{tabular}
\end{table}
\begin{table}[h]
	\centering
	\caption{The hyperparameters we used in the GLUE experiments.}
	\label{app_table4}
	\begin{tabular}{|c|c|c|c|c|}
		\hline
		Epoch & lr &batch size&optimizer &lr scheduler\\
		\hline
		10&2e-5&32& Adam(\cite{ref57})&linear\\
		\hline
	\end{tabular}
\end{table}
\begin{table}[h]
	\centering
	\caption{The hyperparameters we used in SimCLR.}
	\label{app_table5}
	\begin{tabular}{lllll}
		\hline
		\multicolumn{5}{c}{Pre-training} \\ 
		\hline
		Epoch & lr &batch size&optimizer &weight decay\\
		100&1e-3&256& Adam(\cite{ref57})&1e-6\\
		\hline
		\multicolumn{5}{c}{Fine-tuning} \\ 
		\hline
		Epoch & lr &batch size&optimizer &weight decay\\
		100&1e-3&32& Adam(\cite{ref57})&1e-6\\
		\hline
	\end{tabular}
\end{table}

\section{Implementation Details}\label{app_3}
In this section, we present the experimental details, including the selection of models and hyperparameters. In all experiments, we run using a single NVIDIA RTX 4090. It is worth noting, as mentioned in Section~\ref{4.3}, we keep the hyperparameter for the memory term, \( \beta \), consistently at 5, without the need for additional optimization.

\subsection{Supervised Learning}
In supervised learning, we conducted experiments on the CIFAR-10/100 and ImageNet-1K datasets. In CIFAR, we used the ResNet18 and ResNet34 models, along with the LARS optimizer \cite{ref55} and OneCycle learning rate schedule \cite{ref56}. For ImageNet-1K, we employed the ResNet50 model, and the learning rate was multiplied by 0.1 every 30 epochs. Our other hyperparameters are reported in Table~\ref{app_table2} and \ref{app_table3}.

In natural language understanding tasks, we used the pre-trained BERT-base model provided by Hugging Face. For all tasks in GLUE, we conducted experiments with uniform hyperparameters, which are specifically reported in Table~(\ref{app_table4}).

\subsection{Self-Supervised Learning}
In self-supervised learning, we used the SimCLR framework. In the image augmentation module, for both CIFAR-10 and CIFAR-100, we performed the following operations:
\begin{itemize} 
	\item Randomly cropped the image to a 32x32 region and resized it.
	\item Randomly horizontally flipped the image with a 50\% probability.
	\item Converted the image to grayscale with an 80\% probability.
	\item Normalized the image.
\end{itemize}
During the pre-training process, we added a linear head and used an exponential learning rate schedule. In the loss function, the temperature coefficient \( \tau \) was set to 0.5. The remaining hyperparameters are reported in Table~(\ref{app_table5}).
\end{appendices}

\newpage
\bibliography{sn-bibliography}

\end{document}